\newtheorem{lemma}{Lemma}
\newtheorem{problem}{Problem}
\newtheorem{definition}{Definition}
\begin{document}
	\newcommand*{\everymodeprime}{\ensuremath{\prime}}
	%
	\title{SoulMate: Short-text author linking through Multi-aspect temporal-textual embedding}
	%
	%
	%
	%
	
	\author{\{Saeed~Najafipour, 
		Saeid~Hosseini\}*,  
		Wen~Hua,
		Mohammad~Reza~Kangavari, and
		Xiaofang~Zhou

		\IEEEcompsocitemizethanks{
			\IEEEcompsocthanksitem * Saeed Najafipour and Saeid Hosseini are co-first authors
			\IEEEcompsocthanksitem S. Najafipour, S. Hosseini (Affiliated) and M. R. Kangavari are with computational cognitive model research lab., School of computer engineering, Iran University of Science and Technology, Iran. \protect\\Email: {saeed\_najafi}@comp.iust.ac.ir, {kanagvari}@iust.ac.ir, {saeid.hosseini}@uq.net.au	
			\IEEEcompsocthanksitem S. Hosseini is with Cyber Security Lab., Singapore University of Technology and Design, Singapore.
			\IEEEcompsocthanksitem W. Hua and Xiaofang Zhou are with the School of Information Technology and Electrical Engineering, University of Queensland, Brisbane, Australia.\protect\\E-mail: {w.hua}@uq.edu.au, {zxf}@itee.uq.edu.au
		}
		\thanks{}}
	
	%
	%
	
	\markboth{}%
	{Shell \MakeLowercase{\textit{et al.}}: Bare Demo of IEEEtran.cls for Computer Society Journals}
	%
	
	
	
	\IEEEtitleabstractindextext{%
		\begin{abstract}			
			Linking authors of short-text contents has important usages in many applications, including Named Entity Recognition (NER) and human community detection. However, certain challenges lie ahead. Firstly, the input short-text contents are noisy, ambiguous, and do not follow the grammatical rules. Secondly, traditional text mining methods fail to effectively extract concepts through words and phrases. Thirdly, the textual contents are temporally skewed, which can affect the semantic understanding by multiple time facets. Finally, using the complementary knowledge-bases makes the results biased to the content of the external database and deviates the understanding and interpretation away from the real nature of the given short text corpus. To overcome these challenges, we devise a neural network-based temporal-textual framework that generates the tightly connected author subgraphs from microblog short-text contents. Our approach, on the one hand, computes the relevance score (edge weight) between the authors through considering a portmanteau of contents and concepts, and on the other hand, employs a stack-wise graph cutting algorithm to extract the communities of the related authors. Experimental results show that compared to other knowledge-centered competitors, our multi-aspect vector space model can achieve a higher performance in linking short-text authors. Additionally, given the author linking task, the more comprehensive the dataset is, the higher the significance of the extracted concepts will be.
		\end{abstract}
		\vspace{-3mm}
		\begin{IEEEkeywords}
			Author Linking, Short Text Inference, Word2Vec, Temporally Multifaceted, Semantic Understanding
		\end{IEEEkeywords}}		
		\maketitle
		
		\IEEEpeerreviewmaketitle
		\IEEEraisesectionheading{\section{Introduction}\label{introductionSection}}
		\IEEEPARstart{G}{enerating} the subgraphs with similar vertices finds important applications in numerous domains: (i) In recommender systems \cite{chen2018attention}\cite{livne2014citesight}, the members in a subgraph can enrich the history of other cold-start members \cite{cao2017embedding}\cite{manotumruksa2018contextual}\cite{cao2018attentive}. (ii) In community detection, the subgraphs can identify groups of correlated users \cite{cai2017community}\cite{belesiotis2018spatio}. (iii) In propagation networks \cite{cai2017community}\cite{hosseini2018mining},  the group-based immunization policies \cite{hosseini2018exploiting} \cite{Zhang2016} can better control the burst of contagions (i.e. gossips). Nowadays, the social networks record the commonly brief textual contents of the authors that are generated in a high-throughput rate. Given a graph $G$ of short-text authors and the query author $n_q$, our aim is to find a subgraph $\tilde{g}_q$ comprising of highly similar authors to $n_q$. The NP-hard subgraph mining problem can be initiated by the computation of edge weights between authors and completed by a stack-wise graph-cutting algorithm. As the main step in obtaining of subgraphs, several approaches \cite{ganguly2016author2vec}\cite{Rosen-Zvi2012} have been proposed to compute the similarity weight among authors. However, the nature of short-text contents causes certain obstacles. Such \textit{challenges} are instantiated as follows:\\
\noindent\textit{\textbf{Challenge 1 (Mismatched Author Contents)}}\\
		\label{Challenge1}
		\indent Short-text contents are typically informal and include abbreviations, misspellings, and possible errors. For instance, ``afternoon'' is informally used as ``arvo''. Similarly, ``Brisbane'' (in Australia) is usually abbreviated to ``BNE'' and called as ``Brissie''. As a result, current text mining approaches (e.g., topic modeling \cite{nguyen2015improving}\cite{cao2017you} and other heuristics \cite{Hosseini2014}\cite{hosseini2017leveraging}) may not gain sufficient statistical signals and \textit{mismatch} the textual contents of the similar authors. Consequently, the correlation edge weight between the pair of authors will be calculated incorrectly.\\		
		\textit{\textbf{Challenge 2 (Context Temporal Alignments)}}\\
		\indent Vector representation models analyze each word in the context of others.  GloVe \cite{Pennington2014} consumes word pair co-occurrences and CBOW \cite{mikolov2013efficient} predicts a word given the surrounding context. However, the state-of-the-art models \cite{dumais2004latent}\cite{Mikolov2013a} ignore the reality that the word proximity patterns alter in various temporal facets. To witness the fact, we set up an observation on our Twitter dataset \cite{Hosseini2014}.\\
		\vspace{-7mm}
		\begin{figure}[H]
			\centering
			\begin{subfigure}{0.49\linewidth}
				\centering
				\includegraphics[width=1\textwidth]{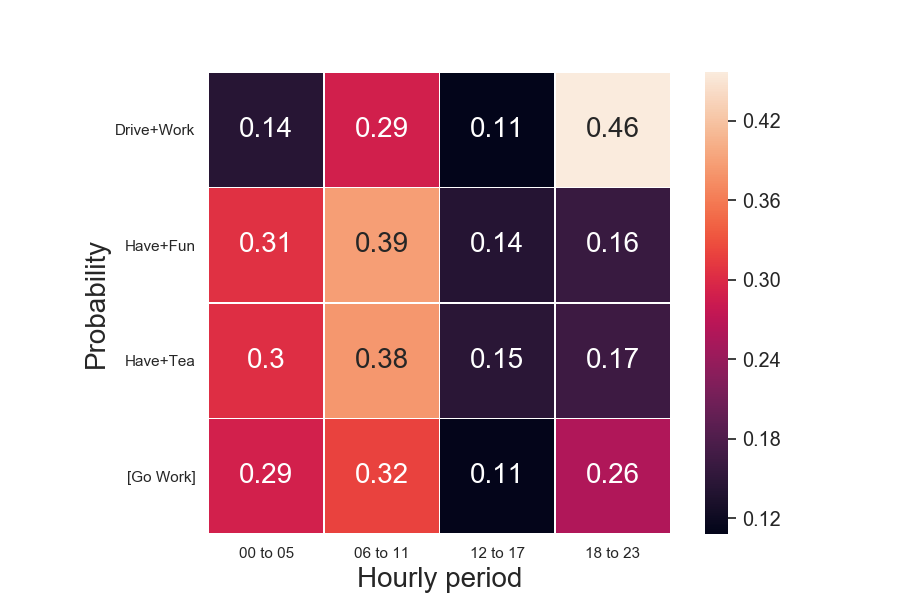}
				\vspace{-5mm}
				\caption{Hour dimension}
				\label{fig:HeatHourDimension}
			\end{subfigure}
			\centering
			\begin{subfigure}{0.49\linewidth}
				\centering
				\includegraphics[width=1\textwidth]{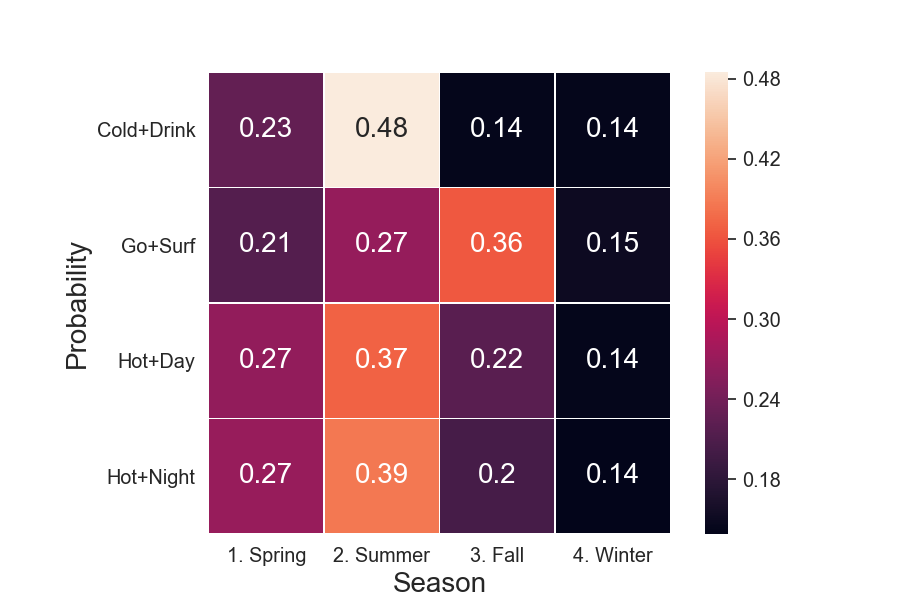}
				\vspace{-5mm}
				\caption{Season dimension}
				\label{fig:HeatSeasonDimension}
			\end{subfigure}
			\hfill
			\vspace{-2mm}
			\caption{Co-occurrence probability}
			\label{fig:Heatmap}
			\vspace{-3mm}
		\end{figure}
			\begin{table*}
		\centering
		\def\arraystretch{1.5}
		\tiny
		\begin{tabular}{|l|l|l|}
			\hline
			\textbf{Concept}                                              & \textbf{Tweet \#1}                                                                                                                                                                    & \textbf{Tweet \#2}                                                                                                                                                                         
			\\ \hline
			\begin{tabular}[c]{@{}l@{}}overconsumption\end{tabular}    & \begin{tabular}[c]{@{}l@{}}I've drunk so much tea today!\end{tabular}   & \begin{tabular}[c]{@{}l@{}}they told me to mince two cabbages today. cabbage nightmares\end{tabular} \\ \hline
			resistance                                                    & \begin{tabular}[c]{@{}l@{}}Everything against me, I'm still here\end{tabular} & \begin{tabular}[c]{@{}l@{}}@H, You feel that burn? muscles working, you becoming fit.\end{tabular}	
			\\ \hline
			dullness                                                          & No ones up. Boring.                                                                                                                                                                   & \begin{tabular}[c]{@{}l@{}}Can't help it. Depression just hits me in the face\end{tabular}                                                                                             \\ 
			
			\hline
			\begin{tabular}[c]{@{}l@{}}wish\end{tabular} & \begin{tabular}[c]{@{}l@{}}Would love to join @m but I'm exhausted!\end{tabular}                                                                    & \begin{tabular}[c]{@{}l@{}}@a so sad!, sending you hugs thoughts and love x\end{tabular}                                                                          \\ 				
			\hline
		\end{tabular}
		\vspace{-2mm}
		\caption{Conceptual relevance}
		\label{tab:AuthorRelevance}
		\vspace{-7mm}
	\end{table*}

		\noindent As Fig. \ref{fig:Heatmap} demonstrates, the distribution probabilities for word pairs can differ in various temporal dimensions. While most people talk about going to work between 6 to 11 am, half others drive to work in the evening (Fig. \ref{fig:HeatHourDimension}). People mostly tweet about \emph{Cold+Drink}, \emph{Hot+Day}, and \emph{Hot+Night} during the summer and such word pairs proximate far less during the winter (Fig. \ref{fig:HeatSeasonDimension}).\\
		\noindent\textit{\textbf{Challenge 3 (Ignoring Conceptual Relevance)}} \\
		\indent Recent works in finding author similarities \cite{alinani2018aggregating}\cite{alshareef2018recommending}\cite{li2018conference}\cite{li2019personalised}, compute the textual relevance between authors through exact or approximate matching. Though the Skip-gram \cite{Mikolov2013a}\cite{Pennington2014} enriches the contents using word vectors to optimize the similarity weights between authors. Using external knowledge bases (e.g. WordNet and Wikipedia) can relax the negative effects of the noisy contents \cite{Hua2017}. However, after all textual enrichments, while the contents belonging to a pair of authors completely or partially turn up irrelevant, we need to how they share some common concepts. As instantiated in Table \ref{tab:AuthorRelevance}, where the tweets (\#1 and \#2) belong to distinct authors and the \emph{contents} are mainly irrelevant, they may still reflect conceptual relevance (e.g. \emph{overconsumption} in row 1).
		Hence, a new approach must confront the above obstacles.\\
		\noindent\textbf{Contributions.} While \emph{our} previous work \cite{Hua2017} detects the concepts of a single tweet with the help of an external knowledge base (KB), the proposed framework in this paper identifies concepts through unsupervised clustering in the tweet space itself, thus eliminating the bias and deviation caused by a KB. Furthermore, the concept distribution of all the tweets is aggregated for each author, which is then utilized as an important feature for determining author similarity. We also enrich the short-text inference models with semantic vector space approaches to further understand the contextual (contents+concepts) relationships between short-text authors. Moreover, inspired by \emph{our} previous work \cite{hosseini2017leveraging}, we devise a temporal-textual embedding model to track the temporal alignments of the short text contents. 
Our contributions are fourfold:\\		
		\vspace{-6mm}
		\begin{itemize}
			\item To the best of our knowledge, we develop the first neural network-based temporal-textual embedding approach that can enrich short-text contents using word vectors that are collectively learned through multiple temporal dimensions.			
			\item We analyze two clustering methods of DB-Scan and K-medoids in extracting various concepts from a corpus. 
			\item We design a stack-wise graph-cutting algorithm to extract the maximum spanning trees (subgraphs) from the authors weighted graph.
			\item We propose a temporal-textual framework that can
			achieve better effectiveness in generating of the
			highly relevant short-text authors.
		\end{itemize}
		\vspace{-1mm}
		\indent The rest of our paper is as follows: in Sec. \ref{relatedwork}, we study the literature; in Sec \ref{problemstatement}, we provide the problem and our framework; in Sec. \ref{methodology} and \ref{experiments} we respectively explain our model and experiments. The paper is concluded in Sec. \ref{conclusion}.
		\vspace{-9mm}
		\section{Related Work}
		\label{relatedwork}
		\vspace{-1mm}
		\noindent As briefed in Table \ref{relatedwork}, the related work comprises word embedding, user similarity, and semantic understanding.				
		\vspace{-5mm}
		\subsection{Word Embedding}
		\label{Sec_Word_Embedding_related}		
		\vspace{-1mm}
		\textit{Word embedding} \cite{rumelhart1986learning} accurately associates each word in documents with a meaningful continuous vector. The applications are multidisciplinary and involve Information Retrieval (IR), Natural Language Processing (NLP) \cite{ling2015two} \cite{hu2017opinion} \cite{majumder2017deep}, and Recommendation Systems (RS) \cite{park2018conceptvector} \cite{cao2017embedding} \cite{grbovic2015commerce} \cite{fu2018novel}. 
		Traditional LSI \cite{Deerwester1990} approach captures the relevance between terms and concepts through \textit{Singular Value Decomposition}(SVD), while \textit{Bag-of-Words}(BOW) \cite{Manning2008} disregards the order of words and considers the word frequencies. Expectation-Maximization(EM) can equip the BOW model \cite{ling2015not} \cite{talley2011database} to reduce ambiguity in machine translation. The continuous BOW (CBOW) \cite{mikolov2013efficient} ensures that the words with similar meanings will be constituted by close embeddings in the latent space. 
		The \textit{Word2vec} \cite{Mikolov2013a} extracts meaningful syntactical and semantical regularities from word pairs and is classified into CBOW and skip-gram. While the Global Vector model (GloVe) \cite{Pennington2014} consumes word pair co-occurrences to accomplish word embedding, the CBOW model \cite{mikolov2013efficient} predicts a word given the surrounding context. Based on our Twitter dataset, the CBOW model surpasses the GloVe approach in the standard analogy test (Section \ref{Comparing-vector-representation-models}).
		 Vector representation has other types: Paragraph2Vec \cite{le2014distributed}, ConceptVector \cite{park2018conceptvector}, Category2Vec \cite{zhu2018exploiting}, Prod2Vec \cite{grbovic2015commerce}. Moreover, \cite{nguyen2015improving} includes \textit{topic models} to collectively generate a word from either Dirichlet multinominal or the embedding module. \cite{zhu2018exploiting} enriches the embedding with Knowledge Graphs to eliminate ambiguity and improve similarity measures. However, the state-of-the-art models \cite{dumais2004latent}\cite{Mikolov2013a} ignore the fact that the word proximity patterns alter in different temporal facets. Even temporal models \cite{bamler2017dynamic} \cite{dubossarsky2017outta} rely on a single temporal aspect and pretermit semantical relations \cite{rosin2017learning}. But our embedding model can employ an infinite number of temporal facets.		
		\vspace{-3mm}
		\begin{table}[H]
			\label{tab:literature}
			\centering
			\caption{Literature}
			\vspace{-3mm}
			\def\arraystretch{1.5}
			\tiny
			\begin{tabular}{cll}
				\Xhline{2\arrayrulewidth}
				Category                                                                          & Approachs                                                           & Refferences                                                                                                                                                                                                                                                  \\ \hline
				\multirow{4}{*}{\begin{tabular}[c]{@{}c@{}}Word\\ Embedding\end{tabular}}         & Embedding Only                                        & \cite{ling2015not}\cite{mikolov2013efficient}\cite{mikolov2013linguistic}\cite{Pennington2014}       \\
				& Graph Analysis                                                      & \cite{cao2017you}\cite{li2019personalised}\cite{seyler2018information}\cite{zhu2018exploiting}                                                                                       \\
				& \begin{tabular}[c]{@{}l@{}}Matrix Factorization (MF)\end{tabular} & \cite{alshareef2018recommending}\cite{chen2018attention}\cite{koren2009matrix}\cite{lee1999learning}\cite{li2019personalised}\cite{Mikolov2013a} \\
				& Temporal                                                            & \cite{bamler2017dynamic}\cite{dubossarsky2017outta}\cite{li2017neural}\cite{manotumruksa2018contextual}                                                                             \\ \hline
				\multirow{5}{*}{\begin{tabular}[c]{@{}c@{}}User\\ Similarity\end{tabular}}        & \begin{tabular}[c]{@{}l@{}}Collaborative Filtering\end{tabular}   & \cite{cao2018attentive}\cite{chen2010short}\cite{li2019personalised}\cite{lian2015content}\cite{seyler2018information}                                            \\
				& Neural Networks                                                     & \cite{grover2016node2vec}\cite{liu2018multi}\cite{Mikolov2013a}\cite{perozzi2014deepwalk}\cite{wu2017mllda}                                                        \\
				& Graph Analysis                                                      & \cite{cao2017you}\cite{ganguly2016author2vec}\cite{li2019personalised}\cite{seyler2018information}\cite{yan2012tweet}                                             \\
				& Author-Oriented                                                     & \cite{alinani2018aggregating}\cite{alshareef2018recommending}\cite{li2018conference}\cite{li2019personalised}                                                                         \\
				& Temporal                                                            & \cite{alharbi2016learning}\cite{alinani2018aggregating}\cite{fani2017temporally}\cite{li2017neural}\cite{manotumruksa2018contextual}                               \\ \hline
				\multirow{4}{*}{\begin{tabular}[c]{@{}c@{}}Semantic\\ Understanding\end{tabular}} & Neural Networks                                                     & \cite{graves2012supervised}\cite{le2014distributed}\cite{mikolov2013efficient}\cite{shen2014learning}                             \\
				& Expansion                                                      & \cite{buckley1995automatic}\cite{efron2012improving}\cite{tang2017end}\cite{zhai2001model}                                                                                          \\
				& Topic Modeling                                                      & \cite{alshareef2018recommending}\cite{blei2009topic}\cite{li2016topic}\cite{li2018conference}                                                                                        \\
				& Concept-Oriented                                                    & \cite{Hua2017}\cite{kim2013context}\cite{song2011short}                                                                                                                                             \\ \Xhline{2\arrayrulewidth}
			\end{tabular}
			\vspace{-5mm}
		\end{table}
		\vspace{-5mm}
		\subsection{User Similarity}
		\vspace{-1mm}
		The similarity between users can be computed by their contextual information through a similarity function (e.g. Cosine or Pearson). Subsequently, the collaborative filtering \cite{chen2010short}, graph-theoretic model \cite{yan2012tweet}, or other classification methods \cite{diaz2012real} \cite{uysal2011user} can be used to group pertinent users. Embedding models like User2Vec \cite{liu2018multi} and Mllda \cite{wu2017mllda} utilize associated vectors to find the order of user relations. Other embedding models like node2vec \cite{grover2016node2vec} and DeepWalk \cite{perozzi2014deepwalk} are motivated by the skip-gram approach and treat each user as a node in social networks. Author2Vec \cite{ganguly2016author2vec} combines the content and link information of the users to better predict the true connection between individuals.
		\cite{alinani2018aggregating}\cite{alshareef2018recommending}\cite{li2018conference}\cite{li2019personalised} compute the user similarity through exact and approximate textual matching. In contrast, \cite{Mikolov2013a} maps users to a latent space. Unlike \cite{fani2017temporally} which proposes a distributional representation for user temporal contents, majority of other embedding models neglect the time factor. Hence, we integrate multi-facet time-based clusters into text embedding to infer both the temporal and textual correlations between users.
		\vspace{-5mm}
		\subsection{Semantic Understanding}
		\vspace{-1mm}
		Textual semantics can be learned through various approaches. Early NER methods \cite{McCallum2003} \cite{Zhou2001} employ classification techniques to label the entities in a document. However, NER models cannot function effectively on noisy short-text contents. Topic models \cite{li2016topic}\cite{blei2009topic} exploit latent topics through word distributions. However, since they do not effectively retrieve the statistical cues from short-text contents, the semantic labeling task is left out abortive.
		Recent state-of-the-art models like CBOW \cite{mikolov2013efficient} and ParagraphVEC \cite{le2014distributed} are proved to be beneficial to the understanding of the textual contents. Expansion models \cite{tang2017end} are inspired by query expansion techniques \cite{buckley1995automatic} \cite{efron2012improving} \cite{zhai2001model} and enrich initial textual contents by complementary relevant contents. More recent Deep Neural Network models such as CNN \cite{shen2014learning} and RNN \cite{graves2012supervised} facilitate short-text understanding through classification. The labeling and feature extraction modules can further promote human language inference. Some other works \cite{song2011short} \cite{kim2013context} \cite{Hua2017} exploit global concepts from the corpus to better represent the semantics in each document. More specifically, the derived concepts from microblog brief contents can increase noise tolerance and improve short text semantic understanding. Hence, our proposed framework employs clustering modules besides embedding procedure to simultaneously take advantage of concepts and contents.
		\vspace{-7mm}
		\begin{figure}[htp]
			\centering
			\includegraphics[width=0.9\linewidth]{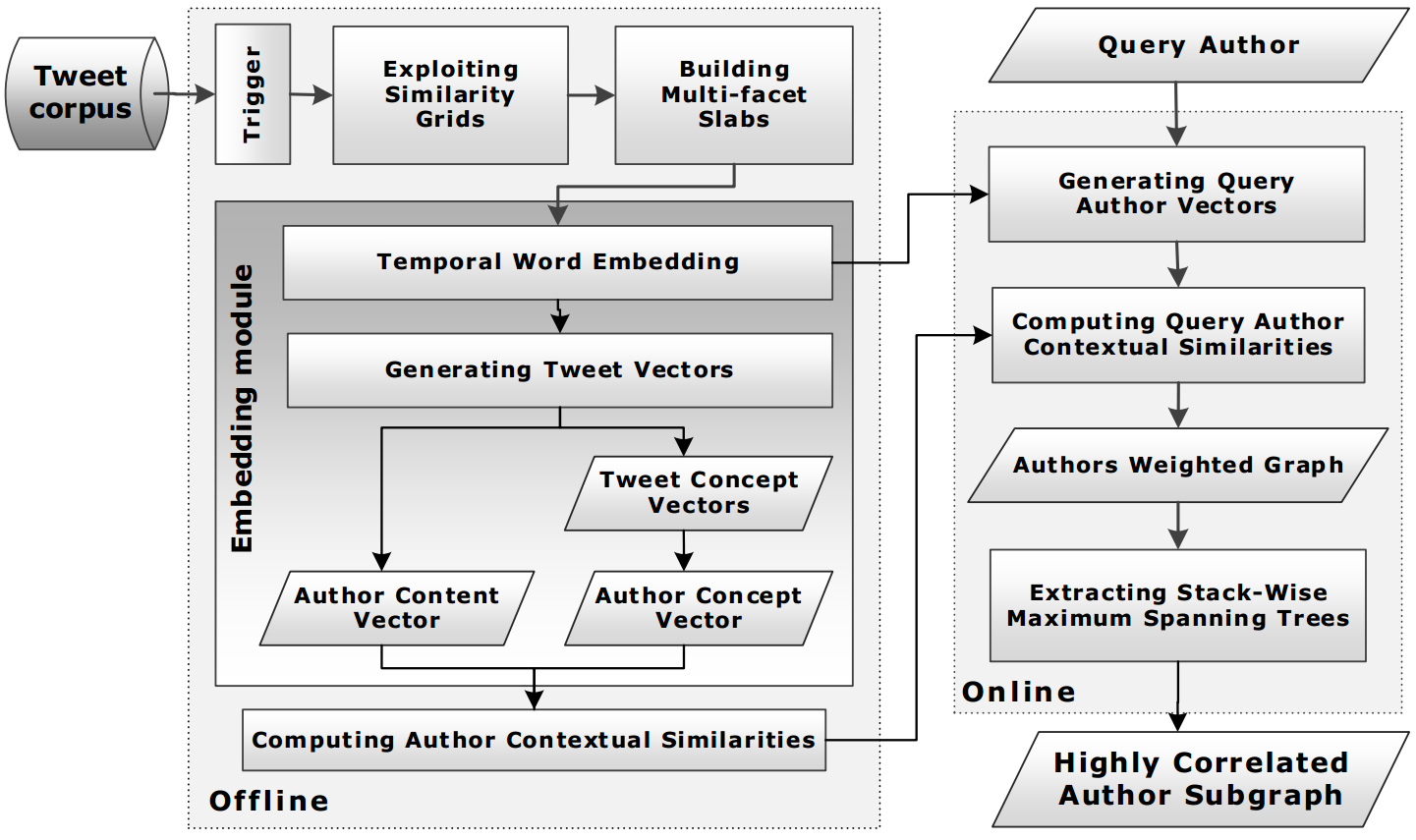}
			\vspace{-2mm}
			\caption{Framework}
			\label{fig:Framework}
			\vspace{-6mm}
		\end{figure}
	\vspace{-4mm}
		\section{Problem Statement}
		\label{problemstatement}		
		\vspace{-1mm}
		In this section we elucidate preliminary concepts, problem statement for author linking, and our proposed framework.
		\vspace{-9mm}
		\subsection{Preliminary Concepts}
		\vspace{-1mm}
		We commence with definitions as formalized below:
		\vspace{-2mm}
		\begin{definition} (node) Every Node, denoted by $n_i$ $\in$ $\mathbb{N}$ represents a distinguished author of short-text contents in a social network.
		\end{definition}	
		\vspace{-3mm}
		
		\begin{definition} (short-text message)
			\label{def:shorttext_message}			
			$m_j \in \mathbb{M}$ advocates a short-text that is composed and published by an author $n_i$. Each message has an identity ($m_j$), the associated author (i.e. $n_i$), and the time-stamp ($m_j.t$). Accordingly, $\mathbb{M}$ ($\mathbb{M}=\{M_1, M_2,\dots,M_n\}$) includes all short-text contents, where $M_i$ delineates the set of short-text messages that are owned by the author $n_i$.
		\end{definition}	
		\vspace{-2mm}

		\vspace{-2mm}
		\begin{definition} (latent temporal facets)
			The associated time-stamp can be interpreted by multiple time facets, where each facet represents a latent parameter $z^k$ in $\mathbb{T} = \lbrace
			z^1,z^2,...,z^t\rbrace$.
		\end{definition}	
		\vspace{-2mm}
		\begin{definition} (temporal slab)
			Each latent temporal dimension $z^x$ can comprise $\eta$ splits $z^x = \lbrace s_{1}^{x},s_{2}^{x},...,s_{\eta}^{x} \rbrace$, e.g. 7 splits for day dimension where each split advocates an individual day in the week. Accordingly, the \textit{Unifacet Temporal Slabs} are built via the merging of similar splits.
		\end{definition}	
		\vspace{-3mm}	
		\begin{definition} (vector representation)
			\vspace{-1mm}
			Given the corpus of textual contents $\mathcal{C}$, for elucidation purposes, the word embedding model conveys each vocabulary $v_i \in \mathbb{V}$ with a distinguished real-valued vector $\vec{v_i}$ which is obtained in conjunction with other word vectors.
		\end{definition}	
		\vspace{-4mm}
		\begin{definition} (authors weighted graph) $G=(\mathbb{N},\mathbb{L})$ is the authors' weighted graph which includes all the authors $\mathbb{N}$ with pertinent links $\mathbb{L}$ between them. The link $l_{ij} \in \mathbb{L}$ describes the similarity weight between two authors ($n_i$,$n_j$) and can be computed using various approaches.
		\end{definition}	
		\begin{definition} (query subgraph) The query subgraph denoted by $\tilde{g_q} \subset G$ contains the set of authors that are highly correlated with the input query user $n_q \in \mathbb{N}$.
		\end{definition}
		We not only use the vectorization for words, but also devise a textual time-aware model to represent the tweets, concepts, and authors by vectors.	
		\vspace{-5mm}	
		\subsection{Problem Definition}		
		\vspace{-1mm}		
		\begin{problem} (extracting hierarchical time-aware slabs)
			Given the set of authors $\mathbb{N}$, messages $\mathbb{M}$, and the temporal latent factors $\mathbb{T}$, we aim to extract the set of uni-facet temporal slabs through uniting similar splits in each facet. The child temporal facets are hierarchically affected by the parent dimension(s).
			\label{ExtractingHierarchical}
			\vspace{-2mm}
		\end{problem}		
		\noindent \emph{The weighted graph reflects the similarity between authors.}
		\vspace{-2mm}
		\begin{problem} (computing contextual similarities)	
			Given the set of temporal facets $\mathbb{T}$ and the messages of each author (e.g. $M_i$ for $n_i$), we aim to compute the correlation weight between each pair of authors ($l_{ij}\equiv Link(n_i,n_j) | \forall n_i,n_j \in \mathbb{N}\:\&\:l_{ij} \in \mathbb{L}$).
			\label{ComputingContextual}
			\vspace{-2mm}
		\end{problem}
		\begin{problem} (author linking)
			Given the author weighted graph $G$, we aim to mine the single query subgraph $\tilde{g}_q$ containing the set of authors, where all of them are highly correlated with every query author $n_q \in \mathbb{N}$. 
			\label{LinkingHighly}
			\vspace{-2mm}
		\end{problem}
		\vspace{-5mm}
		\subsection{Framework Overview}
		Intuitively, the problem of author linking (Prob. \ref{LinkingHighly}) can be divided into two steps: (1) to compute the similarity weights between authors (Prob. \ref{ComputingContextual}). (2) to employ a stack-wise graph cutting algorithm to optimize the number of exploited subgraphs and maximize the intra-subgraph correlations. Figure \ref{fig:Framework} illustrates our framework for linking authors of short-text contents through a multi-aspect temporal textual embedding model.
		In the offline part, we use the microblog contents to acquire the multi-facet grids that reflect the similarity weight between temporal splits (e.g. 24 hourly splits). Subsequently, we employ clustering models to build uni-facet time-based slabs by merging similar splits. We propose and investigate a temporal-textual embedding model to construct tweet vectors by comprising word vectors. Author content vectors are similarly formed through uniting vectorized tweets. We then cluster similar tweets to explore hidden concepts, where each tweet concept vector corresponds to a set of concepts. Inherently, the tweet concept vectors together form the author concept vector. Accordingly, each author can be collectively represented by the \textit{content} and \textit{concept} vectors (so-called \textit{context}), that can be further utilized to compute the similarity weight between any given pair of authors.
		In the online part, we aim to discover a subgraph that includes a set of highly correlated authors to the query author. We firstly generate the contextual vectors of the query author and update the authors' similarity matrix. Finally, we employ a simple but effective stack-wise graph cutting algorithm to extract the output subgraph - as maximum spanning trees.
		\vspace{-4mm}
		\begin{figure}[H]
			\centering
			\begin{subfigure}{0.49\linewidth}
				\centering
				\includegraphics[width=1\textwidth]{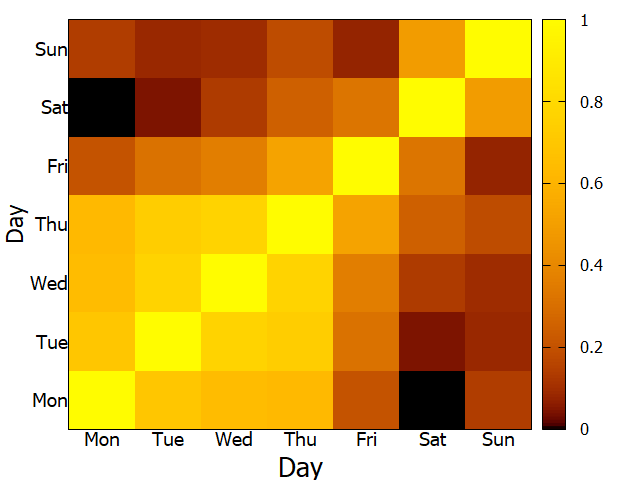}
				\caption{TF-IDF similarity matrix}
				\label{fig:daySimilarity}
			\end{subfigure}
			\centering
			\begin{subfigure}{0.25\linewidth}
				\centering
				\includegraphics[width=1\textwidth]{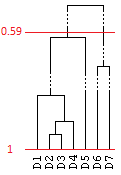}
				\caption{Day slabs}
				\label{fig:dayhac}
			\end{subfigure}
			\vspace{-3mm}					
			\hfill			
			\caption{The day latent facet}
			\label{fig:daydimension}
			\vspace{-5mm}
		\end{figure}
		\begin{table}[H]
		\renewcommand{\arraystretch}{1.3}		
		\centering
		\def\arraystretch{1.5}
		\tiny
		\begin{tabular}{c}
			\hline
			Day Slabs\\
			\hline
			\hline		
			$\lbrace$D1=Monday,D2=Tuesday,D3=Wednesday,D4=Thursday,D5=Friday$\rbrace$\\
			$\lbrace$D6=Saturday,D7=Sunday$\rbrace$\\
			\hline
		\end{tabular}
	\vspace{-2mm}
	\caption{Day Slabs, threshold set to 0.59}
	\label{tab:tabDay}
		\vspace{-6mm}
	\end{table}
		\section{Methodology}
		\label{methodology}
				\vspace{-1mm}
		\subsection{Offline Phase} 
		\label{offline_phase}
		\subsubsection{Constructing multi-facet dynamic slabs} 
		\label{Constructing-Similarity-Grids}
		\vspace{-1mm}
		For the similarity matrix, we measure the \textit{Textual} sameness between each pair of splits (e.g. Sunday and Monday in day dimension). To proceed, we congregate the textual contents of each temporal split. Accordingly, every temporal facet can be assigned with a vector where every cell will contain the short-text contents in the relevant split. We use a modified \textit{TF-IDF} algorithm (Eq. \ref{eq:TFIDF}) to find the weight of each word in the textual contents of every temporal facet:
		\vspace{-3mm}
		\begin{equation} 
		\small
		\label{eq:TFIDF}
		\mathbf{\hat{w}}(t_i,S_k^l)=\frac{f(t_i,S_k^l)}{Max_{(t\in S^l)}\{f(t,S_k^l)\}} \times Log \frac{N}{N(t_i)}
		\vspace{-2mm}
		\end{equation}
		Here, $N$ designates the total number of the splits and $N(t_i)$ is the number of splits at which the term $t_i$ has appeared. While $S_k^l$ is the textual contents of split $k$ of the latent facet $l$, $\frac{f(t_i,S_k^l)}{Max_{(t\in S^l)}\{f(t,S_k^l)\}}$ normalizes the term frequency. Correspondingly, every split can be signified by a vector $\vec{S}_k^l$ where the cells contain the weights of the terms. Eventually, a similarity measure (e.g. \textit{Cosine}) can report how correlated each pair of splits are. The number of dimensions,  (e.g. binary facets of $z^h$ and $z^d$ for the hour and day latent factors) can be decided based on the sparsity of the dataset as well as the complexity of the solution. 		
		Note that people behave differently at the same hour during various days, take weekdays and weekends into consideration. This shows how the child facet can be affected by its parent latent factor.  Hence, unlike our prior work \cite{hosseini2017leveraging}, we heed the effects of the parent(s) on the child temporal facets. For instance, $z_h \subset z_d$ elucidates that the hour dimension comes under the sway of the parent temporal dimension (i.e. $z_d$). 
		As implemented in \cite{hosseini2017leveraging}, the bottom-up \textit{Hierarchical Agglomerative Clustering} (\textit{HAC} via \textit{complete linkage}) can bundle similar temporal splits in each latent temporal facet to shape the final temporal slabs. The \textit{threshold} of the HAC model may place impertinent splits into the same cluster or incorrectly toss relevant splits to separate slabs.
		Figure \ref{fig:daydimension} depicts both the similarity grid and the clustering dendrogram for the day dimension. On the one hand, threshold 1.0 will place the everyday entity in a distinctive slab (no clustering). On the other hand, threshold 0.59 results in more meaningful slabs as reported in Table \ref{tab:tabDay} (weekday versus weekends). Since we consider the influence of the parent facets, for the hour dimension, we will need to consider two similarity matrices (one for each daily slabs). Figure \ref{fig:HourSimilarity} illustrates the hour similarity grids based on which the dendrograms are obtained as shown in Figure \ref{fig:HACs}. Note that as reported in Table \ref{tab:MFSSHour}, we will have two sets of clusters for the hour dimension, where each of them are devoted to the pertinent daily slab.
		\vspace{-4mm}
		\begin{figure}[H]
			\centering
			\begin{subfigure}{0.49\linewidth}
				\centering
				\includegraphics[width=1\textwidth]{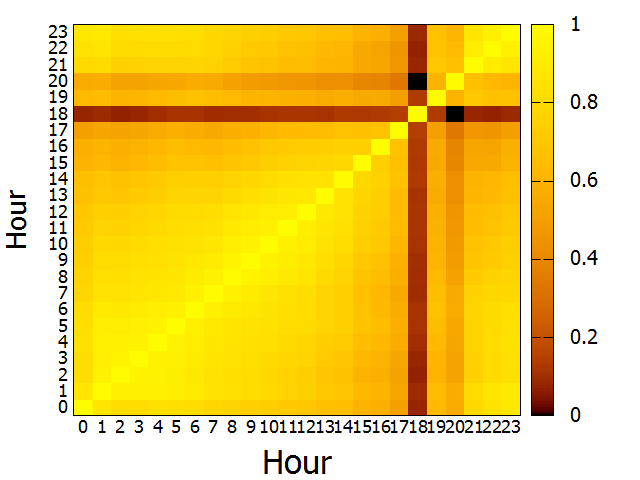}
				\vspace{-5mm}		
				\caption{\textit{weekdays}}
				\label{fig:mondayFriday}
			\end{subfigure}
			\centering
			\begin{subfigure}{0.49\linewidth}
				\centering
				\includegraphics[width=1\textwidth]{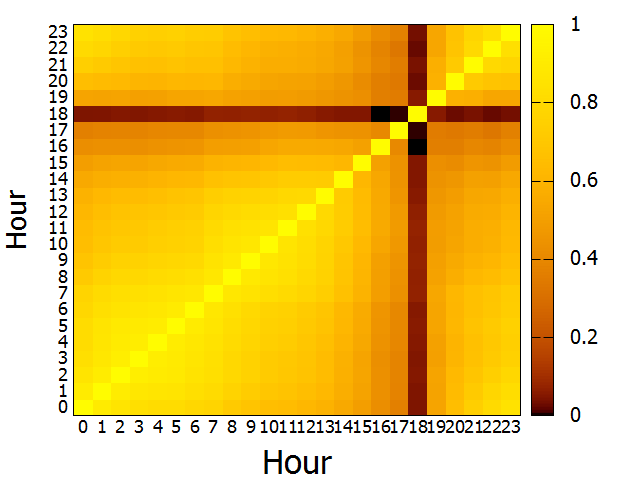}
				\vspace{-5mm}
				\caption{\textit{weekend}}
				\label{fig:saturdaySunday}
			\end{subfigure}
			\hfill
			\vspace{-3mm}
			\caption{Hour similarity grids based on daily temporal slabs}
			\label{fig:HourSimilarity}
			\vspace{-8mm}
		\end{figure}
		\begin{figure}[H]
			\centering
			\begin{subfigure}{0.49\linewidth}
				\centering
				\includegraphics[width=1\textwidth]{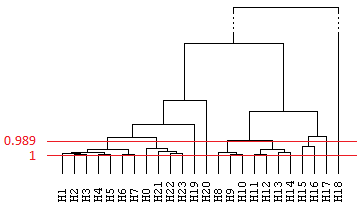}
				\vspace{-6mm}
				\caption{\textit{weekdays}}
				\label{fig:mfss0Hac}
			\end{subfigure}
			\centering
			\begin{subfigure}{0.49\linewidth}
				\centering
				\includegraphics[width=1\textwidth]{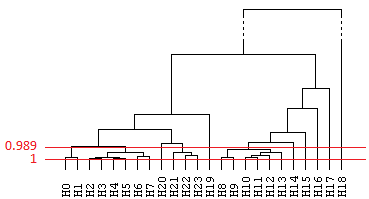}
				\vspace{-4mm}
				\caption{\textit{weekend}}
				\label{fig:mfss1HAC}
			\end{subfigure}
			\hfill
			\vspace{-3mm}
			\caption{hourly slabs hierarchical dendrograms}
			\label{fig:HACs}
			\vspace{-6mm}
		\end{figure}
		\begin{table}[H]
			\renewcommand{\arraystretch}{1.3}
			\centering
			\def\arraystretch{1.5}
			\tiny
			\begin{tabular}{c|c}
				\hline
				Monday till Friday (0.989)  &  Saturday and Sunday (0.989)\\
				\hline
				\hline
				$\lbrace$1,2,3,4,5,6,7$\rbrace$$\lbrace$8,9,10$\rbrace$ & $\lbrace$0,1$\rbrace$$\lbrace$2,3,4,5,6,7$\rbrace$  \\  $\lbrace$11,12,13,14$\rbrace$$\lbrace$15,16$\rbrace$$\lbrace$17$\rbrace$  &  $\lbrace$8,9,10,11,12,13$\rbrace$$\lbrace$14$\rbrace$$\lbrace$15$\rbrace$$\lbrace$16$\rbrace$ \\ $\lbrace$18$\rbrace$$\lbrace$19$\rbrace$$\lbrace$20$\rbrace$$\lbrace$21,22,23,0$\rbrace$        &       $\lbrace$17$\rbrace$$\lbrace$18$\rbrace$$\lbrace$19$\rbrace$$\lbrace$20$\rbrace$$\lbrace$21,22,23$\rbrace$\\
				\hline
			\end{tabular}
		\vspace{-2mm}
			\caption{Extracting hour slabs affected by the day slabs}
			\label{tab:MFSSHour}
			\vspace{-4mm}
		\end{table}
		\vspace{-4mm}
		\subsubsection{Word embedding models}
		\label{Sec:Word_Embedding}
		\vspace{-1mm}
		Informal short-text contents come with excessive noise and writing errors. Hence, as discussed in Challenge 1 (Section \ref{introductionSection}), recent text mining approaches including topic models \cite{nguyen2015improving}\cite{cao2017you} fail to obtain significant statistical cues to match the textual contents of the similar authors. On the other hand, the correlation weight between a pair of microblog authors ($u,v$) will be computed incorrectly when their respective exact textual contents ($O_u,O_v$) are considered. To cope with this challenge, the semantic vector space models \cite{sugawara2015context,Mikolov2013a,mikolov2013efficient,Pennington2014,Deerwester1990} retrieve the vector representation of each word. As the first solution to correctly compute the semantic relevance between authors, one can construct a decently ordered list of similar words to each comprising word  $v_i$ in an author's contents $o_u$, denoted by $\vec{v}_i^o$. Accordingly, the textual contents of each author will be represented by a new \textit{encyclopedic semantic representation} form ($O^\prime_u$) where every word $v_i \in O_u$ will be replaced by the top $\zeta$ most similar words from the associated vector $\vec{v}_i^o$. To this end, we can choose four embedding models: \textit{Singular Value Decomposition}(SVD) \cite{sugawara2015context}, Skip-gram \cite{Mikolov2013a}, CBOW \cite{mikolov2013efficient}, and also GloVe \cite{Pennington2014}. SVD computes the word vectors without training and using matrix operations over the co-occurrence matrix. For three models the well-trained vectors are iteratively enumerated through complex operations (e.g. forward and backward propagation). While the CBOW model estimates the center word of the window by the one-hot vector of the surrounding context (order is important), the skip-gram calculates the co-occurrence probability of the surrounding words with the middle word. Nevertheless, both models return the word vectors that are trained in the hidden layer. GloVe consumes the word co-occurrence matrix, where the model converges toward the optimized values in context and main vectors.
		\vspace{-5mm}
		\subsubsection{Temporal word embedding} 
		\label{Temporal-word-embedding}
		\vspace{-1mm}
		\noindent As elucidated in Section \ref{introductionSection}, the word proximity patterns change in various temporal facets. However, current word embedding models \cite{dumais2004latent}\cite{Mikolov2013a} ignore this reality. Also, notice that the CBOW algorithm can pass the word analogy test better than other vector space models (Sec. \ref{Sec_Word_Embedding_related}). Hence, we devise our novel time-aware embedding model based on CBOW, named as TCBOW, to better track the multi-aspect temporal-textual variations in short-text contents. Note that the temporal slabs monotonically capture the temporal alterations in textual contents through the clustering of similar splits in each temporal dimension. Accordingly, we argue that the time-aware embedding should gain an understanding of each of the slabs and subsequently predict unforeseen observations through merging the knowledge from all the slabs. Hence, we firstly devise a TCBOW module which functions on the slabs of all latent factors.\\		
		Figure \ref{fig:cbow} depicts the diagram for the slab-based TCBOW where $k$ is a single slab in dimension $l \in \mathbb{T}$. 
		The input layer contains the number of $C$ one-hot encoded input words $\{\textbf{x}^l_{k\{1\}},\textbf{x}^l_{k\{2\}},...,\textbf{x}^l_{k\{C\}}\}$ where $C$ is the size of the window and the number of vocabularies is denoted by $|\textbf{V}_k^l|$. The hidden layer $\textbf{h}_k^l$ is N-dimensional and $\textbf{y}_k^l$ represents the output word. The one-hot encoded input vectors are connected to the hidden layer via $\textbf{W}_k^l$ weight matrix and $\textbf{W}_k^{'l}$ associates the hidden layer to the output. We employ \cite{mikolov2013efficient} to compute both weight matrices of $\textbf{W}_k^{l}$ and $\textbf{W}_k^{'l}$. Given surrounding vocabs, Stochastic Gradient Descent maximizes the conditional probability of the output word.\\
		\vspace{-9mm}
		\begin{figure}[H]
			\centering
			\includegraphics[width=0.49\linewidth]{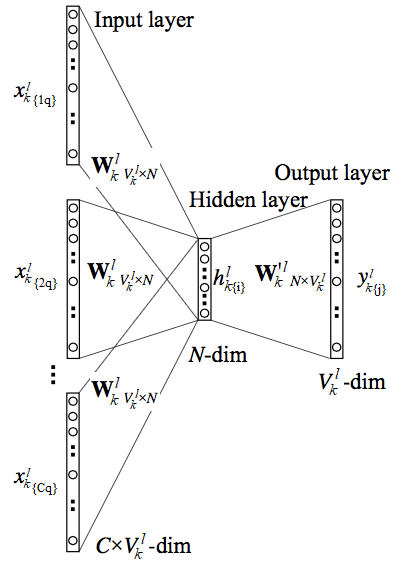}
			\vspace{-3mm}
			\caption{CBOW}
			\label{fig:cbow}
			\vspace{-4mm}
		\end{figure}
		\noindent The hidden layer output is the average of one-hot input vectors that utilize the slab-based weights of $\textbf{W}_k^l$ (Eq. \ref{Eq.hl}).
		\vspace{-2mm}
		\begin{equation}
		\textbf{h}_k^l=\frac{1}{C} \times \textbf{W}_k^l.(\sum_{i=1}^{C} \textbf{x}^l_{k\{i\}})
		\label{Eq.hl}
		\vspace{-2mm}
		\end{equation}
		We also employ Eq. \ref{Eq.ukj} to calculate the input from the hidden layer to every node in the output layer. Here ${\textbf{V}_{k\{\textbf{W}^l_{k\{j\}}\} }^{'l}}$ is the $j^{th}$ column of the output matrix $\textbf{W}_k^{'l}$.
		\vspace{-2mm}
		\begin{equation}
		\textbf{u}^l_{k\{j\}}={\textbf{V}_{k\{\textbf{W}^l_{k\{j\}}\} }^{'l}}^T.\textbf{h}_k^l
		\label{Eq.ukj}		
		\vspace{-2mm}
		\end{equation}
		Finally, we can apply the soft-max function on $\textbf{u}^l_{k\{j\}}$ to attain the output layer $\textbf{y}^l_{k\{j\}}$ (Eq. \ref{Eq.softm}).\\
		\vspace{-2mm}
		\begin{equation}
		\textbf{y}^l_{k\{j\}}=\frac{Exp(\textbf{u}^l_{k\{j\}})}{\sum_{j^{'}=1}^{\textbf{V}^l_k}Exp(\textbf{\textbf{u}}^{'l}_{k\{j\}})}
		\label{Eq.softm}
		\vspace{-2mm}
		\end{equation}
		Given slab $k$ in the facet $l$ and relying on embedding weights ($\textbf{W}_k^{l}$), $\vec{\textbf{v}^l_k}_{\{i\}}$ can denote the embedded vector for each word $i$ in the hidden layer. The \textit{cosine} function (Eq. \ref{cosinesim}) can determine the slab based similarity between each word pair ($i,j$).
		\vspace{-2mm}
		\begin{equation}
		\label{cosinesim}
		\begin{split}
		\mathcal{S}_{Cosine}(\vec{\textbf{v}^l_k}_{\{i\}},\vec{\textbf{v}^l_k}_{\{j\}})=\frac{\vec{\textbf{v}^l_k}_{\{i\}}.\vec{\textbf{v}^l_k}_{\{j\}}}{|\vec{\textbf{v}^l_k}_{\{i\}}|\;\times\;|\vec{\textbf{v}^l_k}_{\{j\}}|}\quad\quad\quad\quad\quad\quad\;
		\vspace{-10mm}
		\end{split}
		\end{equation}
		The vocabulary corpus can be denoted by $V_k^l$ where $\vec{\textbf{v}^l_k}_{\{i\}}$ and $\vec{\textbf{v}^l_k}_{\{j\}}\;$ are the subsets of slab-based vector $\vec{\textbf{V}}_k^l$.
		To continue, we invoke two attributes to better infer the correlation intensity between each pair of words ($i,j$) in temporal slabs.
		\vspace{-6mm}
		\begin{itemize}
			\item \textit{Level} ($\mathcal{S}_{level}(l,\vec{\textbf{v}_i},\vec{\textbf{v}_j})$) explains how extended each pair of words correlate together in all the temporal slabs of a single latent facet of $l$.
			\item \textit{Depth} ($\mathcal{S}_{depth}(l,\vec{\textbf{v}_i},\vec{\textbf{v}_j})$) infers how the pair of words correlate in each slab, while hierarchically impacted by parent temporal dimension(s).
			\vspace{-3mm}
		\end{itemize}
		Eq. \ref{CosineSimLevel} formalizes the level-wise similarity between the vectors of the words $i$ and $j$, where $\tilde{\textbf{A}_k^l}$ is the normalized accuracy of the analogy test for slab $k$ in dimension $l$.
		\vspace{-2mm}
		\begin{equation}
		\label{CosineSimLevel}
		\mathcal{S}_{level}(l,\vec{\textbf{v}_i},\vec{\textbf{v}_j})=\sum_{k\;\in\;l}\tilde{\textbf{A}_k^l}\times\; \mathcal{S}_{Cosine}(\vec{\textbf{v}^l_k}_{\{i\}},\vec{\textbf{v}^l_k}_{\{j\}})
		\vspace{-2mm}
		\end{equation}
		
		Similarly, Eq. \ref{CosineSimDepth} shows the depth similarity between the hidden layer vectors of the pair ($i$,$j$). Note that we propose two static methods for the latent facet class. Suppose that the hour facet is directly impacted by the day dimension $z_h \subset z_d$, in this case, if the current facet in the loop is the day, $l.child()$ can return the child facet, the slabs from hour factor. In contrast, if the hour is the current level, $l.parent()$ can obtain the pointer to the day temporal dimension.
		\vspace{-2mm}
		\begin{equation}
		\label{CosineSimDepth}
		\begin{split}
		\mathcal{S}_{Depth}(l,\vec{\textbf{v}_i},\vec{\textbf{v}_j})=\sum_{k\;\in\; \tau_l^b}\sum_{q\;\in\; \tau_{l.child()}^b} \tilde{\textbf{A}}_q^{l.child()}\quad\quad\quad\quad\\\times\; \mathcal{S}_{Cosine}(\vec{\textbf{v}}^{\;l.child()}_{q\{i\}},\vec{\textbf{v}}^{\;l.child()}_{q\{j\}})
		\end{split}
		\vspace{-2mm}
		\end{equation}
		Here $\tau_l^b$ and $\tau_{l.child()}^b$ are the set of unifacet temporal slabs that are respectively associated with the current level ($l$) and its child dimension. Accordingly, $\tilde{\textbf{A}}_q^{l.child()}$ denotes the normalized accuracy of the analogy test for slab $q$ of the child facet for $l$ dimension. Here the cosine similarity is computed for the hidden layer vector representation of the words ($v_i,v_j$) in two layers of the child ($l.child()$) influenced by the parent ($l$). Note that the similarity feature for all the slabs of the child latent factor $l.child()$ (denoted by $q$), are impacted by corresponding slab $k$ in the parent latent factor (i.e. $l$). Note that Eq. \ref{CosineSimDepth} only comprises two latent facet, that is the current layer $l$ and its direct child $l.child()$. However, given any temporal dimension, the depth property should include the effect of children facets. To this end, we generalize Eq. \ref{CosineSimDepth} to Eq. \ref{CosineSimDepthGenerlized} to recursively call the depth property toward the leaf nodes ($l.child()\neq\;Null$).
		\vspace{-3mm}  
		\begin{equation}		
		\mathcal{S}_{Depth}(l,\vec{\textbf{v}_i},\vec{\textbf{v}_j})=\begin{cases}
		\sum_{q\;\in\; \tau_l^b} \tilde{\textbf{A}}_q^l\times\; \mathcal{S}_{Cosine}(\vec{\textbf{v}}^{\;l}_{q\{i\}},\vec{\textbf{v}}^{\;l}_{q\{j\}}) \\\quad \quad \quad\;\; \;+\; \mathcal{S}_{Depth}(l.child(),\vec{\textbf{v}_i},\vec{\textbf{v}_j}) \\\quad\quad\quad\quad\quad\quad\; if\;l.child()\neq\;Null\\\\
		\sum_{q\;\in\; \tau_l^b} \tilde{\textbf{A}}_q^l\times\; \mathcal{S}_{Cosine}(\vec{\textbf{v}}^{\;l}_{q\{i\}},\vec{\textbf{v}}^{\;l}_{q\{j\}})\\\quad\quad\quad\quad\quad\quad\quad\quad\quad\quad\;\; otherwise
		\end{cases}
		\label{CosineSimDepthGenerlized}
		\vspace{-1mm}
		\end{equation}
		Eventually, the correlation intensity ($[-1,1]$) between each pair of words ($i,j$) can be collectively evaluated by both the \{level and depth\}-wise attributes, Eq. \ref{CosineSimTotal}:
		\vspace{-2mm}
		\begin{equation}
		\label{CosineSimTotal}
		\overline{\mathcal{S}}_{Cosine}(\vec{\textbf{v}_i},\vec{\textbf{v}_j})=\sum_{l\;\in\; \mathbb{\textbf{T}}} (\mathcal{S}_{level}(l,\vec{\textbf{v}_i},\vec{\textbf{v}_j})+\mathcal{S}_{Depth}(l,\vec{\textbf{v}_i},\vec{\textbf{v}_j}))
		\vspace{-1mm}
		\end{equation}
		After computing the similarity between each pair of words, we obtain $B^{TCBOW}$ as a $|V|\times|V|$ matrix, where each row $i$ is associated with a single vocabulary $v_i$ and the resulting word vector $\vec{v}_i$ can represent the similarity between $v_i$ and other words. $\vec{V}^{TCBOW}$ denotes the set of the word vectors in $B^{TCBOW}$ grid.\\
		Inherently, the dimension of vectors in $\vec{V}^{TCBOW}$ equates to the number of words (i.e. $|V|$) and turns out to be much more than $|d|$, which is the dimension of vectors in the hidden layer. Due to complexity, such a high dimension can negatively affect efficiency. 
		To address this challenge, we propose $\vec{V}^C$ which collectively computes the word vectors based on each of the slabs in all the latent facets. To compute these collective word vectors, we take into account the effect of the word vectors of $\vec{V}_k^l$ of each slab $S_k^l$ by multiplying those vectors into their normalized analogy accuracy $\tilde{A}_k^l$.\\
		We compute the collective word vector $\vec{v}_i^{C}$ for each word $v_i$ using two attributes of \textit{level} and \textit{depth}. Unlike $\mathcal{V}_{level}(l,i)$ which only includes the slabs for the current latent facet (Eq. \ref{VectorLevel}), the depth property $\mathcal{V}_{depth}(l,i)$ hierarchically considers the effects from all the parent latent facets.
		\vspace{-2mm}
		\begin{equation}
		\label{VectorLevel}
		\mathcal{V}_{level}(l,i)=\sum_{k\;\in\;l}\tilde{\textbf{A}_k^l}\times\; \vec{v}_{k\{i\}}^{\;l}
		\vspace{-2mm}		
		\end{equation}
		$\mathcal{V}_{level}(l,i)$ considers the slabs in the same latent facet. Similarly, Eq. \ref{VectorDepthGenerlized} calculates the depth property.
		\vspace{-2mm}
		\begin{equation}	
		\mathcal{V}_{Depth}(i)=\begin{cases}
		\sum_{q\;\in\; \tau_l^b} \tilde{\textbf{A}}_q^l\times\; \vec{v}_{k\{i\}}^{\;l} \;+\; \mathcal{V}_{Depth}(l.child(),i)\\ \quad\quad\quad\quad\quad\quad\quad\quad\quad\;\;  if\;l.child()\neq\;Null\\\\
		\sum_{q\;\in\; \tau_l^b} \tilde{\textbf{A}}_q^l\times\; \vec{v}_{k\{i\}}^{\;l} \quad \quad  \quad\quad \quad \;  otherwise
		\end{cases}
		\label{VectorDepthGenerlized}
		\end{equation}
		Here $l$ is the current layer and $i$ is the index of words for which the collective vector is computed. Like Eq. \ref{CosineSimDepthGenerlized}, the depth property behaves recursively.
		\vspace{-2mm}
		\begin{equation}
		\label{VectorTotal}
		\vec{v}_i^{C}=\sum_{l\;\in\; \mathbb{\textbf{T}}} (\mathcal{V}_{level}(i)+\mathcal{V}_{Depth}(i))
		\vspace{-2mm}
		\end{equation}
		Ultimately, the final collective word vector for each word $v_i$, denoted by $\vec{v}_i^{C}$, will be attained by the summation of level and depth functions (Eq. \ref{VectorTotal}).
		\vspace{-3mm} 
		\subsubsection{Generating tweet vectors}
		\vspace{-1mm}
		Given the word vectors that are constructed by the temporal embedding model (Collective), we now need to generate \textit{tweet vectors}.
		\textit{Summation} and \textit{Averaging} are two simple but effective approaches to combine word vectors in each tweet and obtain the outcome tweet vector.
		While the summation approach generates vectors with bigger values and augments the computation time, the average method places the resulting vector between input vectors, which can better represent the blending. The Tweet vector is computed by merging the vectors of the comprising word (Eq. \ref{SumAvgVector}).\\
		\vspace{-2mm}
		\begin{equation}
		\label{SumAvgVector}
		\begin{split}
		\vec{m}_i^{\; Avg}=\frac{\sum_{j=1}^{|m_i|}\vec{v}_{m_i[j]}}{|m_i|},\quad \vec{m}_i^{\; Sum}=\sum_{j=1}^{|m_i|}\vec{v}_{m_i [j]}
		\end{split}
		\vspace{-4mm}
		\end{equation}
		Here, $|m_i|$ denotes the number of words in each short-text $m_i$ and $\vec{v}_{m_i [j]}$ constitutes the word vector for the $j^{th}$ word in $m_i$.
		Tweet vectors can represent the comprising word vectors. However, short-text instances might refer to different concepts when context differs. Therefore, understanding of the concept(s) to which the tweets correspond matters in recognition of the authors' orientations.		
		To this end, we need to dynamically discover the concepts that are shared among each group of tweets. Hence, any upcoming unclassified tweet can be conceptually grouped into one of the existing tweet clusters. We utilize two popular clustering methods of DBScan \cite{schubert2017dbscan} and K-Medoids \cite{cheng2015contextual}, which differ in nature.  Where DB-Scan detects the densely grouped tweets, K-medoids discover the outliers, that have been cast-out by the DB-Scan algorithm. 
		Nevertheless, we employ the well-known \textit{Euclidean distance} to measure the space between cluster points (Eq. \ref{EuclideanDistance}).\\
		\vspace{-3mm}
		\begin{equation}
		\label{EuclideanDistance}
		Distance_{Euclidean}(\vec{v}_i,\vec{v}_j)=\sqrt{\sum_{p=1}^{d}(\vec{v}_{i\{p\}}-\vec{v}_{j\{p\}})}
		\vspace{-2mm}		
		\end{equation}
		Here $d$ denotes the dimension of word vectors and $p$ indicates the index of any word vector (e.g. $\vec{v}_{i}$). Nevertheless, given the list of exploited clusters, we can present each tweet $m_i$ with the \textit{tweet concept vector}. The new vector lists the dissimilarities between each tweet $m_i$ and each of the concepts that are the center point of each cluster, Eq. \ref{ConceptVector}.
		\vspace{-2mm}
		\begin{equation}
		\label{ConceptVector}
		\forall_{j=1}^{|C^{f}|}\;\;\vec{m}_{i}^{\;f}[j]=Distance_{Euclidean}(\vec{m}_i,\vec{c}^{\;f}_j)
		\vspace{-2mm}
		\end{equation}
		Here, $|\mathbb{C}^{f}|$ shows the number of clusters that are extracted using any clustering method of $f$. Where $\vec{m}_{i}^{\;f}$ shows the tweet concept vector that is computed using $f$, and $\vec{c}^{\;f}_j$ symbolizes the center tweet vector of the $j^{th}$ cluster that is extracted using the same clustering model (i.e. $f$). Finally, $\vec{m}_{i}^{\;f}[j]$ denotes the $j^{th}$ entry of the $\vec{m}_{i}^{\;f}$. Furthermore, $\mathbb{C}$ as the chosen set of clustering models ($f\;\in\mathbb{C}$) can collectively include two features from the type of clustering (k-medoid and DB-Scan), and whether the tweet vectors are constituted from summation or average of comprising word vectors. For instance, $Sum-DB$ can specify an $f$ clustering model where the tweet vectors are constructed by the summation of word vectors and the employed clustering model is DB-Scan. It is noteworthy that the Tweet concept vectors tend to impose the smaller grid of $\mathbb{R}^{|\mathbb{C}^{f}|}$.
		\vspace{-3mm}
		\subsubsection{Generating author vectors} 
		\label{Generating author vectors}
		\vspace{-1mm}
		We explain in detail the approach we take to associate each author with content and concept vectors. Recall that each tweet vector is constructed via the merging of the vectors of the words it comprises. Similarly, we can apply the summation or average operators on tweet vectors to obtain the author's content vector. Let $m_j$ be a tweet from the set of tweets composed by the author $n_i$ ($m_j \in M_i$) where $\vec{m}_j$ denotes the vector for $m_j$. It is easy to see that the sum and average vectors for author $n_i$ can be computed using Eq. \ref{SumAvgAuthorVector}.\\
		\vspace{-2mm}
		\begin{equation}
		\label{SumAvgAuthorVector}
		\begin{split}
		\vec{n}_i^{\; Content-Sum}=\sum_{m\;\in\;M_i}\vec{m},\quad \vec{n}_i^{\; Content-Avg}=\frac{\sum_{m\;\in\;M_i}\vec{m}}{|M_i|}
		\end{split}
		\end{equation}
		$\vec{n}_i^{\; Content-Sum}$ and $\vec{n}_i^{\; Content-Avg}$ respectively represent the sum and average author content vectors.
		Rather than considering operational functions (sum and average), let us consider statistical approaches. Here, we aim to predict the vector of an unobserved tweet that is deemed to be semantically aligned toward the preferences of the author $n_i$. While topic-modeling \cite{cao2017you} seems to be the best solution for document generation, we extend the embedding model of the tweet vectors by the K-Fold statistics. Let $d$ be the dimension of the tweet content vector. Hence, we can conveniently discover the maximum probability for each value to be assigned to the corresponding item in the author's content vector. To this end, we partition the range [-1,+1] into $\varsigma$ number of bins (In here $\varsigma=10$). For each index in the author vector, we consider the bin which contains the values from the majority of the tweet vectors on the same index. The selected bin will represent the weight with the highest probability for the current index in author vector.
		\vspace{-4mm}
		\begin{figure}[H]
			\centering
			\includegraphics[scale=0.5]{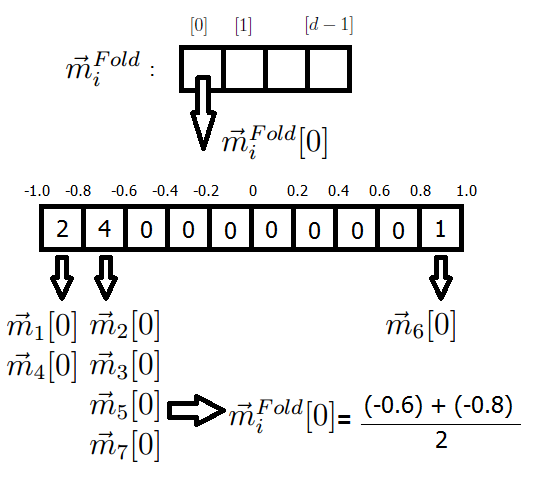}
			\vspace{-4mm}
			\caption{(Fold)} 
			\label{Img:Fold}
			\vspace{-4mm}
		\end{figure}
		As instantiated in Fig. \ref{Img:Fold}, the bin [-0.8,-0.6] reports the values from four tweets and demonstrates that with more than 50\% chance, the index zero of the author vector will be assigned between -0.8 and -0.6. For the bins with equal numbers, we utilize a linked list and distribute the prediction chance accordingly. Similar to the author's content vector, we can aggregate the tweet concept vectors of each author to construct the author's concept vector ($\vec{n}_i^{\; Concept}$). The concept vectors are low in dimension and dynamically carry the knowledge of concepts through clustering. However, content vectors come with less complexity and include the information of the tweets which are possibly ignored in DB-Scan or misclassified by K-medoid. The eventual author similarity should include the impacts from both content and concept vectors (Eq. \ref{UserVector0}) that is adjusted by $\alpha$.
		\vspace{-2mm}
		\begin{equation}
		\label{UserVector0}
		\forall_{i=1}^{|U|}\forall_{j=1}^{|U|},\;X^{Total-\alpha}_{ij}=\alpha\times X^{Concept}_{ij}+(1-\alpha)\times X^{\;Content}_{ij}
		\vspace{-1mm}
		\end{equation}
		Because of the difference in dimensionality, it is not inherently feasible to consolidate content and concept vectors of the authors. Given $U$ as the set of authors, we should compute the similarity between each pair of authors to build the distinctive correlation matrices using concept and content vectors, respectively denoted by $X^{Concept}$ and $X^{Content}$. The impact from correspondent matrices can be merged by $\alpha$ to maximize the final performance.
		\vspace{-4mm}
		\subsection{Online Phase}
		\vspace{-2mm}
		In the online phase, we aim to mine a set of authors that are highly correlated to the query author $n_q$. Two duties are undertaken in online phase: Including query author and extracting stack-wise maximum spanning trees.
		\vspace{-3mm}
		\subsubsection{Including query author}
		\vspace{-1mm}
		This duty is divided into two tasks of \textit{generating query author vectors} and \textit{computing query author contextual similarities} that are quite similar to what we explained in Section \ref{offline_phase}. Firstly, we need to generate the author vectors for the query author $n_q$. This is especially necessary for the cold start authors. Because by posting a single tweet, the conceptual alignment of the author may evolve.\\
		Given the set of current tweets ($M_q=\{m_1,m_2,...,m_r\}$) belonging to $n_q$, we can generate corresponding tweet vectors $\vec{M}_q=\{\vec{m}_1,\vec{m}_2,...,\vec{m}_r\}$ using precomputed $\vec{V}^C$. This step is not time-consuming as the language model is already generated in the offline phase.\\
		\textit{What is the usage for Trigger?} Trigger follows frequent intervals to continuously rebuild the slabs and subsequently construct the vector representations. This is especially useful to include the tweets of new authors where it can partially affect the embedding results. Therefore, the new tweets will be included in the embedding process in the offline phase as soon as the trigger is released.\\
		Using the tweet vectors we can easily retrieve the $n_q$'s content vector $\vec{n}^{Content}_q$. Correspondingly, we need to find the distance between each tweet $m_i \in M_q$ and the cluster centroids which results in the set of tweet concept vectors $\vec{M}^f_q=\{\vec{m}^f_1,\vec{m}^f_2,...,\vec{m}^f_r\}$. Here $f$ denotes the selected clustering approach. Accordingly, the author concept vector $\vec{n}^{Concept}_q$ can be computed by averaging of the vectors in $\vec{M}^f_q$. Given the content $\vec{n}^{Content}_q$ and concept $\vec{n}^{Concept}_q$ vectors of the query author $n_q$, we can respectively update $X^{Content}$ and $X^{Concept}$ author similarity matrices which are accomplished through measuring the similarity between $n_q$ and others. Eventually, graph $G=(\mathbb{N},\mathbb{L})$ can represent the authors weighted graph, where $\mathbb{N}$ is the set of nodes (authors) and $\mathbb{L}$ denotes the set of undirected edges with similarity weights.
		\subsubsection{extracting query author subgraph} 
		\label{extracting-query-author-subgraph} 
		\vspace{-1mm}	
		We now aim to exploit the subgraphs with highly correlated authors which further comprises the query author $n_q$. In general, we can address the challenge through Lemma \ref{lemma_nq}.  Inspired by the Lemma \ref{lemma_nq}, we devise the Stack-Wise Maximum Spanning Tree (SW-MST) approach (Algorithm \ref{alg6}) to calculate the MST for each of the highly correlated subgraphs in $G$. As algorithm \ref{alg6} shows, we firstly push the edges into the empty stack $S$ in ascending order, where the links with the lower weight are pushed downward. Correspondingly, we initiate an empty graph $G'=(\mathbb{N}',\mathbb{L}')$ to store the resulting spanning trees. To continue, we iteratively pop the edges from the stack and add them to $\mathbb{L}'$ and append the corresponding nodes to $\mathbb{N}'$. We repeat the process until every $n_i \in \mathbb{N}$ is added to the $\mathbb{N}'$. The $G'$ will finally include a set of maximum spanning-trees. In other words, the algorithm \ref{alg6} firstly extracts distinctive subsets of the graph in the form of maximal cliques and subsequently exploits an MST out of the cliques. Finally, each exploited MST can represent a highly correlated author subgraph.
		\vspace{-2mm}
		\begin{lemma}
			\textit{Linking a set of highly correlated authors to the query author $n_q$ can be facilitated by the inner-author edge weights.}
			\label{lemma_nq} 
			\vspace{-2mm}	
		\end{lemma}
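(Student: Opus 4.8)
The plan is to show that the correlation structure needed to identify $\tilde{g}_q$ is already encoded in the edge weights $l_{ij}$, so that the linking task requires no information beyond these weights. First I would fix a formal reading of ``highly correlated'': call a set $A \subseteq \mathbb{N}$ with $n_q \in A$ a highly correlated cluster if every pair $(n_i,n_j) \in A \times A$ satisfies $l_{ij} \geq \theta$ for a suitable threshold $\theta$, equivalently if the subgraph induced on $A$ is a clique in the $\theta$-thresholded graph $G_\theta$. Recalling Eq.~\ref{UserVector0}, each weight $l_{ij} = X^{Total-\alpha}_{ij}$ is a convex combination of two cosine-type similarities, hence symmetric and monotone in pairwise resemblance; this lets me treat ``more correlated'' and ``larger edge weight'' interchangeably.

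The main argument would then proceed in two steps. In the first step I would show that the target set depends on the weights alone: the collection of authors highly correlated with $n_q$ is precisely the maximal clique of $G_\theta$ containing $n_q$, which by construction is a function of the entries $l_{ij}$ only. In the second step I would connect this characterisation to the spanning-tree extraction of Algorithm~\ref{alg6}. Since a maximum spanning tree greedily accumulates the heaviest admissible edges, any MST restricted to the clique around $n_q$ uses only edges of weight at least $\theta$; conversely, edges leaving the cluster carry weight below $\theta$ and are pushed to the bottom of the stack $S$, so they are admitted last and only when forced. Thus the heavy-edge-first ordering recovers exactly the cluster that the thresholding defines.

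The hard part will be controlling transitivity along tree paths. In a spanning tree, $n_q$ need not be adjacent to every member of $\tilde{g}_q$; it may reach some author $n_j$ only through an intermediate $n_i$. I would therefore need a bounded-loss property of the form $l_{qj} \geq \min(l_{qi},l_{ij}) - \delta$, ensuring that connectivity through heavy edges still implies direct high correlation up to a small slack $\delta$. Deriving this from the convex-combination form of $l_{ij}$ together with the triangle-type behaviour of cosine similarity is the crux of the argument; once it is in place, every member linked to $n_q$ in the extracted MST is provably highly correlated with $n_q$, which establishes that the inner-author edge weights suffice to facilitate the linking.
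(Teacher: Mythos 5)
Your proposal is far more ambitious than what the paper actually does: the paper's entire proof is a two-sentence assertion that, in the fully connected weighted graph, the subgraph containing $n_q$ yields the maximum spanning tree with the biggest average edge weight, and that $n_q$ is therefore highly correlated with every member of $\tilde{g}_q$ ``via either of the direct or indirect link(s).'' It never formalizes ``highly correlated,'' never introduces a threshold or a clique, and never addresses transitivity. So you have correctly isolated the one step that carries all the logical weight --- and that step is established neither in your argument nor in the paper's.

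The gap is the bounded-loss property $l_{qj} \geq \min(l_{qi},l_{ij}) - \delta$. You defer it as ``the crux,'' but in the form you need it is not derivable and is in general false. Cosine similarity obeys only an angular triangle inequality, $\arccos(s_{qj}) \leq \arccos(s_{qi}) + \arccos(s_{ij})$, so even with a single intermediate node two similarities of $0.9$ guarantee only $s_{qj} \gtrsim 0.62$; the slack is not small, and it compounds along every hop of the tree path from $n_q$ to $n_j$, whose length Algorithm \ref{alg6} does not bound. Worse, the weights $X^{Total-\alpha}_{ij}$ of Eq.~\ref{UserVector0} are convex combinations of two similarity matrices built from vectors of different dimensionalities, so no single angular metric controls them. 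A second, smaller mismatch: the component returned by the stack-wise procedure is not the maximal clique of a $\theta$-thresholded graph containing $n_q$ --- the algorithm admits edges in descending weight order until every node of $G$ is covered, so the effective cutoff is set by global coverage rather than a per-query $\theta$, and a component of the resulting forest need not induce a clique above any threshold. To close the argument you would have to either bound the diameter of the extracted trees or replace the transitivity claim with a direct check that every node in the returned component has edge weight at least $\theta$ to $n_q$ itself.
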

		\begin{proof}
			Given a fully connected weighted graph which represents the weight of contextual similarity between author vectors, the subgraph $\tilde{g}_q$ comprising $n_q$ will result in the maximum spanning tree with the biggest average edge weight. With this logic, the node $n_q$ will be highly correlated to every node in $\tilde{g}_q$ via either of the direct or indirect link(s).
			\vspace{-2mm}
		\end{proof}
		\vspace{-3mm}
		\begin{algorithm}[H]
			\caption{Stack-Wise Max. Spanning Tree (SW-MST)}
			\label{alg6}
			\textbf{Input:} $G$\\
			\textbf{Output:} $G', S$
			\begin{algorithmic}[1]
				\STATE $\mathbb{N}'=\emptyset, \mathbb{L}'=\emptyset, \mathbb{N}''= \mathbb{N}, \mathbb{L}''= \mathbb{L}, S=\emptyset $
				\WHILE{$\mathbb{L}'' \neq \emptyset$}
				\STATE $l=Min(\mathbb{L}'')$
				\STATE $S.push(l)$
				\STATE $\mathbb{L}''.remove(l)$
				\ENDWHILE
				\WHILE{$\mathbb{N}'' \neq \emptyset$}
				\STATE $\;l=S.pop()$
				\STATE $\;\mathbb{L}'.append(l)$
				\IF{$l[0] \notin \mathbb{N}'$}
				\STATE $\mathbb{N}'.append(l[0])$
				\ENDIF
				\IF{$l[1] \notin \mathbb{N}'$}
				\STATE $\mathbb{N}'.append(l[1])$
				\ENDIF
				\IF{$l[0] \in \mathbb{N}''$}
				\STATE $\mathbb{N}''.remove(l[0])$
				\ENDIF
				\IF{$l[1] \in \mathbb{N}''$}
				\STATE $\mathbb{N}''.remove(l[1])$
				\ENDIF
				\ENDWHILE
				\STATE $G'=(\mathbb{N}',\mathbb{L}')$
				\STATE return $G'$, $Avg(\mathbb{L}')$
			\end{algorithmic}
		\end{algorithm}	
		\vspace{-9mm}	
		\section{Experiment}
		\label{experiments}
		\vspace{-1mm}
		We conducted extensive experiments on a real-world twitter dataset \cite{Hosseini2014} to evaluate the performance of our model in short-text author linking. 
		We ran the experiments on a server with 4.20GHz Intel Core i7-7700K CPU and 64GB of RAM. The codes are available to download \footnote{\url{https://sites.google.com/view/temporal-textual-embedding}}.
		\vspace{-5mm}
		\subsection{Data}
		\label{dataset}
		\vspace{-1mm}
		Our Twitter dataset \cite{Hua2017} includes 8 million English tweets in Australia, collected via \textit{Spritzer Twitter Feed}. The sampling was done at various times of the day for a complete year. We then used \textit{Twitter API} to select approximately $4K$ users from streaming tweets and retrieved up to 1000 records from their Twitter history. Finally, we attained $\approx 1M$ geo-tagged \textit{tweets} which are all composed in Australian territory. The dataset contains 305K vocabs and is made of 65M collocations.
		\vspace{-3mm}
		\subsubsection{Baselines}
		\label{baselines}
		\vspace{-1mm}
		The baselines in computing of the similarity weights between authors are listed as follows. Note that the author's similarities can be computed by measuring the similarity between author vectors.
		\vspace{-2mm}
		\begin{itemize}
			\item $SoulMate_{Concept}$: 
			As explained in Section \ref{Generating author vectors}, this method renders the authors with the closeness of their tweets to each of the concepts.
			\item $SoulMate_{Content}$: this embedding approach \cite{mikolov2013efficient} obtains the tweet vectors and then combines them to form author content vectors.
			\item $SoulMate_{Joint}$: this model regulates $\alpha$ to combine the author's similarities through concept and content vectors (Section \ref{Generating author vectors}).
			\item Temporal Collective: this model computes the collective word vector through multi-facet temporal embedding \cite{hosseini2017leveraging}\cite{hosseini2019teals} and then enriches the textual contents of each author by replacing each word with its top $\zeta$ most similar words. Finally, TF-IDF can measure the textual similarity between authors.
			\item CBOW Enriched: this model uses CBOW \cite{mikolov2013efficient}\cite{Kuzi:2017:QEE:3077136.3080660} to produce the distributed representation of the words. Given the enriched textual contents of the authors, the model employs the Jaccard coefficient to compute the textual similarities.
			\item Document Vector: this model \cite{beel2016paper} computes the similarities between authors using TF-IDF statistics.
			\item Exact Matching: this straightforward baseline exactly matches the short-text contents of the authors.
			\vspace{-4mm}		
		\end{itemize}
		\subsection{Effectiveness}
		\vspace{-1mm}		
		\subsubsection{Basic Comparison of Vector Space Models}
		\vspace{-1mm}		
		\label{Comparing-vector-representation-models}
		In this section, we firstly apply the Google word analogy task \cite{Mikolov2013a} to compare the effectiveness and efficiency of the vector representation models. The vector representation baselines (Section \ref{Sec:Word_Embedding}) are four-fold: SVD \cite{sugawara2015context}, Skip-gram \cite{Mikolov2013a}, CBOW \cite{mikolov2013efficient}, and GloVe \cite{Pennington2014}. \textit{SVD-15:15000} limits the word pair co-occurrences between 15 and 15000. Also, the numerical extension in GloVe-30 highlights the number of training epochs. The analogy test aims to discover the model that on the one hand, suits best to the short-text noisy contents, and on the other hand, is the best candidate for time-aware embedding. The test includes $\approx 20K$ syntactical and semantical questions like ``a is to b as c is to ?'', where each competitor suggests a word to alter the question mark.
		\vspace{-7mm}		
		\begin{figure}[H]	
			\centering			
			\begin{subfigure}{0.49\linewidth}
				\centering
				\includegraphics[width=1\textwidth]{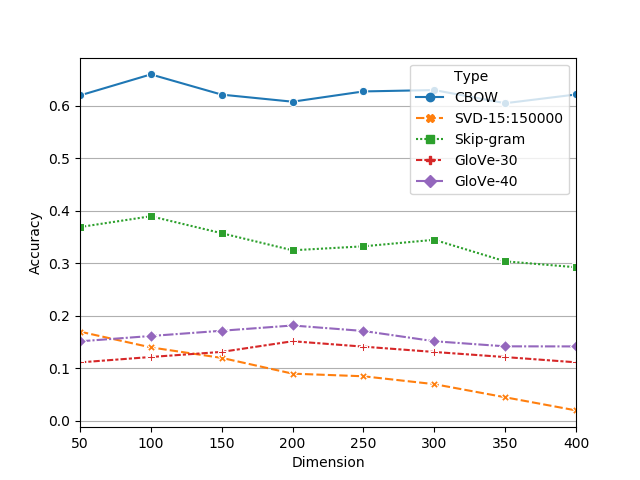}
				\caption{Accuracy}
				\label{fig:TAccU}
			\end{subfigure}
			\centering
			\begin{subfigure}{0.49\linewidth}
				\centering
				\includegraphics[width=1\textwidth]{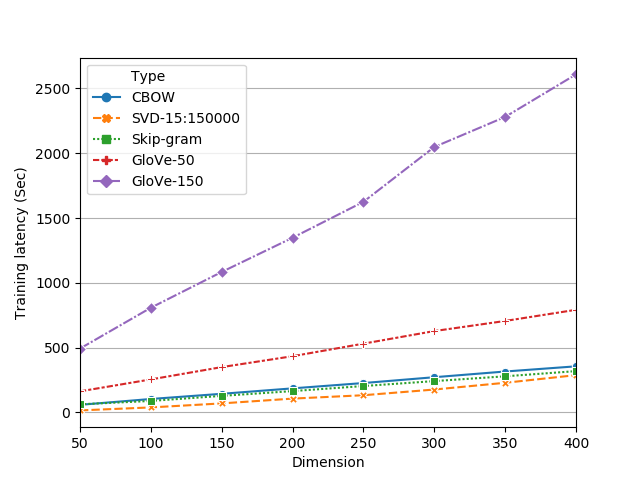}
				\caption{Efficiency}
				\label{fig:TTL-old}
			\end{subfigure}
			\hfill
			\vspace{-2mm}
			\caption{Performance of the vector space models}
			\label{fig:UnderStudyAccuracy}
			\vspace{-3mm}
		\end{figure}
		Fig. \ref{fig:TAccU} reports the accuracy of the analogy task on the twitter dataset (Section \ref{dataset}), where the dimension varies. Our dataset suffices the words for only $\approx 7K$ questions, resulting in lower numbers. The CBOW model overpasses all rivals and SVD performs the least as it lacks the training phase. Conversely, the CBOW as the most noise-resistant model surpasses skip-gram because it better involves the context of the words in the training procedure. Finally, notice that excessive noise in microblog contents leads to a sparse and oversize co-occurrence matrix which significantly reduces the performance of the GloVe model.\\
		Our online author linking framework must handle millions of the short-text contents, where the vector representation module forms the underlying time-aware module. So as illustrated in Fig. \ref{fig:TTL-old}, we compare the efficiency of vector space methods. We notice that due to the lack of training procedure the temporal latency of the SVD model is the least. Furthermore, for the models with training, the CBOW and skip-gram closely gain the highest efficiencies. However, the GloVe models take the highest time in training which is naturally due to the huge size of the input co-occurrence matrix. Hence, we conclude that CBOW is better than other models in both effectiveness and efficiency.
		\vspace{-3mm}
		\begin{table}[H]
			\centering
			\caption{Precision of author similarity in subgraph mining}
			\vspace{-3mm}
			\def\arraystretch{1.5}
			\small
			\begin{tabular}{l|c|c|}
				\cline{2-3}
				& \multicolumn{2}{c|}{Score Distribution}                                                                                                               \\ \hline
				\multicolumn{1}{|c|}{\begin{tabular}[c]{@{}c@{}}\end{tabular}} & \begin{tabular}[c]{@{}c@{}} textual$\uparrow$\\ conceptual$\uparrow$\end{tabular} & \begin{tabular}[c]{@{}c@{}}textual$\downarrow$\\ conceptual$\uparrow$\end{tabular} \\ \hline
				\multicolumn{1}{|l|}{$SoulMate_{Concept}$}                                                                                      & 0.07                                 &    0.30                                    \\ \hline
				\multicolumn{1}{|l|}{$SoulMate_{Content}$}                                                                                      & 0.43                                    & 0.05                                    \\ \hline
				\multicolumn{1}{|l|}{$SoulMate_{Joint}$}                                                                                      & \textbf{0.67}                                     & \textbf{0.32}                                       \\ \hline
				\multicolumn{1}{|l|}{Temporal Collective}                                                                                      & 0.63                                     & 0.01                                       \\ \hline
				\multicolumn{1}{|l|}{CBOW Enriched}                                                                                      & 0.48                                     & 0                                        \\ \hline
				\multicolumn{1}{|l|}{Document Vector}                                                                                      & 0.21                                     & 0                                        \\ \hline
				\multicolumn{1}{|l|}{Exact Matching}                                                                                      & 0.39                                     & 0.01                                        \\ \hline
			\end{tabular}
			\label{FinalComparison}
			\vspace{-3mm}
		\end{table}
		\subsubsection{Comparison of the Author Subgraph Mining Methods}
		\label{Effectiveness of author similarities}
		In this part, we compare our approaches to linking authors with well-known competitors (Section \ref{baselines}). As the first step, the author's similarity matrix of each baseline model can establish the author weighted graph. We propose three algorithms to calculate author similarities ($SoulMate_{Concept}$, $SoulMate_{Content}$, and $SoulMate_{Joint}$). Eventually, given the originated weighted graph, each model can employ SW-MST algorithm (Section \ref{extracting-query-author-subgraph}) to acquire the final author subgraphs, as Maximum Spanning Trees (MST).\\
		\textbf{Benchmark- } Since the authors within each spanning tree should exceedingly correlate, as Table \ref{FinalComparison} shows, we evaluate the baselines through assessing the similarity between authors in the same exploited subgraphs. To this end, we first obtain the set of MSTs out of $G'$ (output of SW-MST) which comprises any of 50 arbitrarily chosen authors. We then pick top 5 MSTs with at least 5 nodes that possess the highest average edge weights. Finally, given the top 10 most similar tweets from each pair of authors in the selected MSTs, we consider the votes of 5 local (Australian) experts. The possible votes are defined as follows:
		\vspace{-2mm}
		\begin{itemize}
		\item score 0: neither textually or conceptually similar.
		\item score 1: minor textual and conceptual similarity.
		\item score 2: high textual and conceptual similarity.
		\item score 3: minor textual but high conceptual similarity.	
		\vspace{-5mm}
		\end{itemize}
		Subsequently, we compute the average of the votes given to each pair of tweets and round it to the nearest lower integer. We then count the tweet pairs with the scores of 2 and 3 for each one of the author similarity calculation methods. The precision metrics are then calculated by dividing the number of 2 and 3 scores (admitted by the average of experts' votes) by the total number of selected tweet pairs in subgraphs. $SoulMate_{Concept}$ is devised to detect the conceptual similarities, where the textual relevance is minor. Moreover, the $SoulMate_{Content}$ can trace the textual and conceptual relevance. However, since $SoulMate_{Joint}$ combines both modules through parameter adjustment ($\alpha$=0.6), it gains the highest votes for both conditions. It is interesting to see that where the textual similarity between short-text contents is very low and all textual models including Temporal Collective, CBOW, Document Vector and Exact matching fail (perform less than 2\%), $SoulMate_{Concept}$ can detect the semantic correlation between authors by 30\%. Notice that the higher the number of exploited concepts, and the better the clustering models, $SoulMate_{Concept}$ model can gain a better precision. Table \ref{FinalComparison} shows, where the textual similarity is low, $SoulMate_{Concept}$ can still find (precision \%30) the conceptually relevant author pairs. Conversely, $SoulMate_{Content}$ can track textual similarity (\%43). To briefly mention, $SoulMate_{Joint}$ as our final model performs more accurately than other baselines. Note that based on our peripheral experiments on Temporal word embedding (Section \ref{Temporal-word-embedding}), where the accuracy of $\vec{V}^{TCBOW}$ in generating of the tweet vectors is 0.881, the dimension is quite large (the number of words ($|V|$)). Hence, we employ the collective manner $\vec{V}^C$ which offers a lower precision of 0.861 but in contrast, provides a much smaller dimension (the size of hidden layer vectors ($|d|$)).\\
		\vspace{-7mm}
		\subsubsection{Effect of embedding on author content vectors}
		\label{Impact of embedding and aggregation approaches on content vectors}
		\vspace{-1mm}
		We here study the impact of several parameters on the effectiveness of author contents vectors. We compare \textit{CBOW} versus \textit{Collective} that are respectively the best non-temporal and temporal embedding models. To form the tweet vectors, the word vectors can be combined using summation or averaging. The tweet vectors can also form the author's content vectors through various aggregations, such as Average, Summation, and 10 Fold model.\\
		\textbf{Benchmark -} The experts label the top 10 most similar tweets with the mentioned scores that were previously defined in the benchmark part of Section \ref{Effectiveness of author similarities}. We also deploy averaging and rounding to the bottom integer method, as we did in the previous section, to take into account the votes of all the experts. We then consider the computed score as the final score for each selected pair of tweets.
		Inspired by \cite{Rosa2011}, we propose two weighted precision equations of \ref{P_Conceptual} and \ref{P_Textual} to compare the effectiveness of the methods:
		Note that in both equations, the pertinent scores are prefixed by $\rho$ (e.g. the number of items for score 1 is denoted by $\rho_1$).\\
		$\mathcal{P}_{Conceptual}$: The weighted precision formulated in Eq. \ref{P_Conceptual} pays more attention to the pairs with high conceptual but low textual similarity which leads to the high numerical coefficient of 3 for $\rho_3$ and the null significance for $\rho_0$. Here, the precision is normalized by multiplying the sum of the score counts by 3 in the denominator.\\
		\vspace{-3mm}
		\begin{equation}
		\label{P_Conceptual}
		\mathcal{P}_{Conceptual}=\frac{\rho_1+\rho_2\times 2+\rho_3\times 3}{3\times (\rho_0+\rho_1+\rho_2+\rho_3)}
		\vspace{-1mm}
		\end{equation}
		$\mathcal{P}_{Textual}$: As verbalized in Eq. \ref{P_Textual}, both textual and conceptual similarities gain the same importance in $\mathcal{P}_{Textual}$ metric. This enforces the same coefficient of 2 for $\rho_2$ and $\rho_3$.
		\vspace{-2mm}
		\begin{equation}
		\label{P_Textual}
		\mathcal{P}_{Textual}=\frac{\rho_1+(\rho_2+\rho_3)\times 2}{2\times (\rho_0+\rho_1+\rho_2+\rho_3)}
		\vspace{-1mm}
		\end{equation}
		\begin{table*}[t]
			\centering			
			\caption{Weighted precision of user content vectors}
			\vspace{-2mm}
			\def\arraystretch{1.5}
			\tiny
			\begin{tabular}{cl|c|c|c|c|}
				\cline{3-6}
				\multicolumn{1}{l}{}                                    &                                                                                                                        & \multicolumn{4}{c|}{\begin{tabular}[c]{@{}c@{}}Tweet vector\\ combination type\end{tabular}} \\ \hline
				\multicolumn{1}{|c|}{\multirow{2}{*}{Embedding method}} & \multicolumn{1}{c|}{\multirow{2}{*}{\begin{tabular}[c]{@{}c@{}}Author content vector\\ combination type\end{tabular}}} & \multicolumn{2}{c|}{Average}           & \multicolumn{2}{c|}{Summation}                      \\ \cline{3-6} 
				\multicolumn{1}{|c|}{}                                  & \multicolumn{1}{c|}{}                                                                                                  & $\mathcal{P}_{Textual}$            & $\mathcal{P}_{Conceptual}$           & $\mathcal{P}_{Textual}$                  & $\mathcal{P}_{Conceptual}$                   \\ \hline
				\multicolumn{1}{|c|}{\multirow{3}{*}{CBOW}}             & Average                                                                                                                & 0.547              & 0.433             & 0.645                    & 0.511                    \\ \cline{2-6} 
				\multicolumn{1}{|c|}{}                                  & Summation                                                                                                              & 0.547              & 0.433             & 0.645                    & 0.511                    \\ \cline{2-6} 
				\multicolumn{1}{|c|}{}                                  & 10 Fold                                                                                                                & 0.594              & 0.442             & 0.668                    & 0.477                    \\ \hline
				\multicolumn{1}{|c|}{\multirow{3}{*}{Collective}}            & Average                                                                                                                & 0.568              & 0.453             & \textbf{0.652}           & \textbf{0.538}           \\ \cline{2-6} 
				\multicolumn{1}{|c|}{}                                  & Summation                                                                                                              & 0.568              & 0.453             & \textbf{0.652}           & \textbf{0.538}           \\ \cline{2-6} 
				\multicolumn{1}{|c|}{}                                  & 10 Fold                                                                                                                & 0.602              & 0.461             & \textbf{0.676}           & 0.493                    \\ \hline
			\end{tabular}
			\label{tab:ContentVector}
			\vspace{-6mm}
		\end{table*}
		\noindent While among the embedding methods, the time-aware approach (collective embedding) is better than CBOW, the summation works better than average in the aggregation of word vectors. The 10 fold algorithm gains a higher precision for $\mathcal{P}_{Textual}$ in the aggregation of tweet vectors. However, since the 10 Fold approach performs low for $\mathcal{P}_{Conceptual}$, we opt for other aggregation algorithms that support both of the weighted precisions. 
		Since the normalized vector for the summation method is very similar to the average approach, both precisions come with similar results. Regarding author content vectors, while summation and averaging methods are the same, we select the average operator for the lower decimal values and the less computational complexities.
		\vspace{-3mm}		
		\subsubsection{Impact of Short-text Vector Clustering}
		\label{Impact of short-text vector clustering}
		\vspace{-1mm}		
		To extract the concepts from microblog contents, we need to cluster the tweet vectors. We aim to select those thresholds that can maximize the number of exploited clusters (concepts), and simultaneously maintain a satisfying quality. Hence, in this section, we compare the performance of various clustering models. Note that we use $K$ as the number clusters for K-medoids and $\epsilon$ as the radius for DBSCAN. Where the thresholds vary, it is both tedious and time-consuming to test the quality of clusters by human experts. Therefore in this section, we firstly study the cohesion and separation properties of the clusters using two well known methods of the Silhouette score \cite{rousseeuw1987silhouettes} and the Davies-Bouldin index \cite{davies1979cluster}. Subsequently in Section \ref{Impact of zeta on selected}, we limit the range of thresholds first and then ask the experts to evaluate the quality of clusters.
		\vspace{-4mm}
		\begin{figure}[H]
			\centering
			\begin{subfigure}{0.6\linewidth}
				\centering
				\includegraphics[width=1\textwidth]{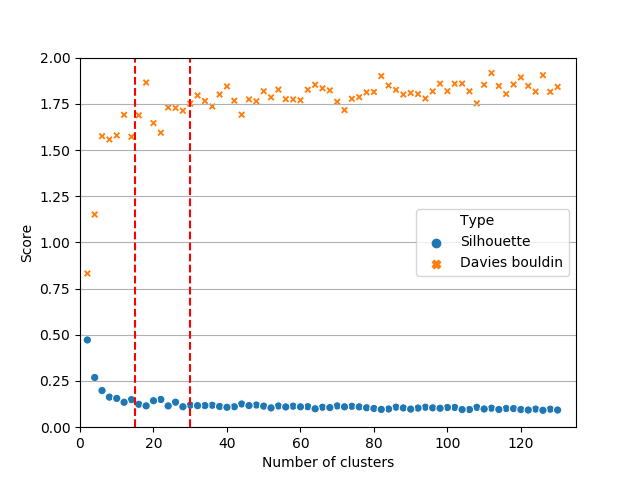}
				\vspace{-6mm}
				\caption{K-Medoid: Quality}
				\label{fig:K_Medoid_Cluster_Score}
			\end{subfigure}
			\centering
			\begin{subfigure}{0.45\linewidth}
				\centering
				\includegraphics[width=1\linewidth]{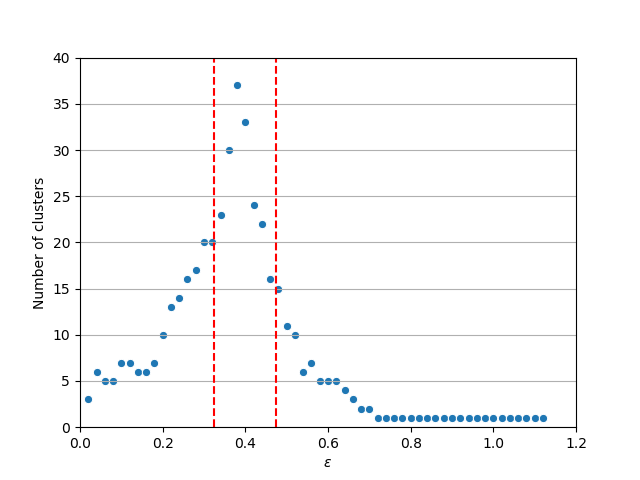}
				\vspace{-6mm}
				\caption{DBSCAN: \# of clusters}
				\label{fig:EPS_Clusters}
			\end{subfigure}`
			\centering
			\begin{subfigure}{0.45\linewidth}
				\centering
				\includegraphics[width=1\textwidth]{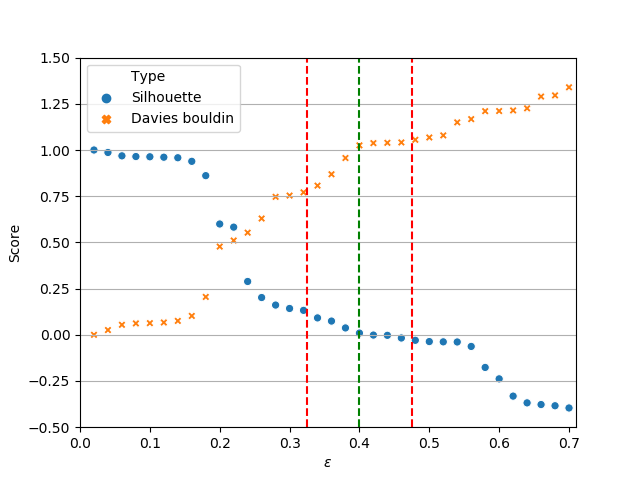}
				\vspace{-6mm}
				\caption{DBSCAN: Quality}
				\label{fig:Siloh_Davis_Eps}
			\end{subfigure}
			\hfill
			\vspace{-3mm}
			\caption{Impact of thresholds on clustering}
			\label{fig:ImpactofClusterNumber}
			\vspace{-4mm}
		\end{figure}
		Fig. \ref{fig:ImpactofClusterNumber} illustrates the impact of thresholds on clustering. In general, the lower the Davies-Bouldin index and the higher the Silhouette score, the better the threshold will be. For K-medoids as depicted in Fig. \ref{fig:K_Medoid_Cluster_Score}, we select the range [15,30] where the number of clusters is higher and the indicators highlight a good clustering quality. Subsequently, from the selected range we choose the values, 20, 22, 24, and 26 for K, where they return a higher number of high-quality clusters. Similarly, Fig. \ref{fig:EPS_Clusters} and \ref{fig:Siloh_Davis_Eps} study the impact of $\epsilon$ on the number and the quality of clusters in the DBSCAN method. Fig. \ref{fig:EPS_Clusters} shows the number of clusters where $\epsilon$ varies. Here the threshold range of [0.325,0.475] supports the highest number of clusters, which is more than 15. Consequently, Fig. \ref{fig:Siloh_Davis_Eps} can analyze the clustering scores against various $\epsilon$ values, where we aim to find the thresholds in the selected range. We notice that when the value of $\epsilon$ grows bigger than 0.4, both the number of concepts and the quality metrics reduce. We then nominate 0.36,0.38,0.4, and 0.42 for $\epsilon$ to maximize the number of high standard clusters.
		\vspace{-3mm}
		\subsubsection{Selection of Clustering Thresholds}
		\label{Impact of zeta on selected}
		\vspace{-1mm}
		\begin{table*}[t]
			\centering
			\caption{Precision of user concept vectors}
			\vspace{-3mm}
			\def\arraystretch{1.5}
			\tiny
			\begin{tabular}{cc|c|c|c|c|}
				\cline{3-6}
				&                                                                                                          & \multicolumn{4}{c|}{\textbf{Tweet vector combination type}}                                      \\ \hline
				\multicolumn{1}{|c|}{\multirow{2}{*}{\textbf{\begin{tabular}[c]{@{}c@{}}Embedding\end{tabular}}}} & \multirow{2}{*}{\textbf{\begin{tabular}[c]{@{}c@{}}Author concept vector\\cluster type\end{tabular}}} & \multicolumn{2}{c|}{\textbf{Avg}} & \multicolumn{2}{c|}{\textbf{Sum}} \\ \cline{3-6} 
				\multicolumn{1}{|c|}{}                                                                                                 &                                                                                                          & \textbf{$\mathcal{P}_{Textual}$}        & \textbf{$\mathcal{P}_{Conceptual}$}        & \textbf{$\mathcal{P}_{Textual}$}       & \textbf{$\mathcal{P}_{Conceptual}$}       \\ \hline
				\multicolumn{1}{|c|}{\multirow{2}{*}{CBOW}}                                                                            & K-Medoids ($K=22$)                                                                                                & 0.10869                & 0.08405                & 0.10869                & 0.08405                \\ \cline{2-6} 
				\multicolumn{1}{|c|}{}                                                                                                 & DBScan  ($\epsilon=0.36$)                                                                                                 & 0.09166                & 0.06666                & 0.09166               & 0.06666                \\ \hline
				\multicolumn{1}{|c|}{\multirow{2}{*}{Collective}}                                                                           & K-Medoids ($K=22$)                                                                                                & \textbf{0.17291}                 & \textbf{0.12638}                 & \textbf{0.17291}                & \textbf{0.12638}                \\ \cline{2-6} 
				\multicolumn{1}{|c|}{}                                                                                                 & DBScan ($\epsilon=0.36$)                                                                                                  & 0.16388                & 0.12592                & 0.16388                & 0.12592                \\ \hline
			\end{tabular}
			\label{tab:authorconcept}
			\vspace{-6mm}
		\end{table*}
		We limited the range of thresholds in Section \ref{Impact of short-text vector clustering}. In this section, we choose the best final clustering thresholds that are voted by human experts.\\ 
		\textbf{Benchmark -} To select the best thresholds in K-medoid and DBSCAN, we consider the clusters that have been retrieved by each threshold. We then choose the top 10 most similar pair of tweets from each tweet cluster, where the similarity measure is carried out by the well-known TF-IDF method. Subsequently, the human experts of five determine the similarity of the pairs through majority voting using $\rho_0$,$\rho_1$,$\rho_2$, and $\rho_3$ that are then used to compute $\mathcal{P}_{Textual}$ (Section \ref{Impact of embedding and aggregation approaches on content vectors}). The best threshold should generate the clusters with the highest weighted precision.
		\vspace{-4mm}
		\begin{figure}[H]	
			\centering
			\begin{subfigure}{0.49\linewidth}
				\centering
				\includegraphics[width=1\textwidth]{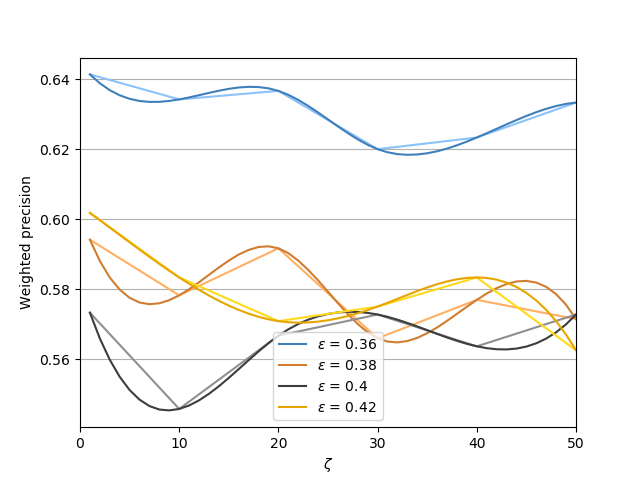}
				\caption{DBSCAN}
				\label{fig:DB-Scan}
			\end{subfigure}
			\centering
			\begin{subfigure}{0.49\linewidth}
				\centering
				\includegraphics[width=1\textwidth]{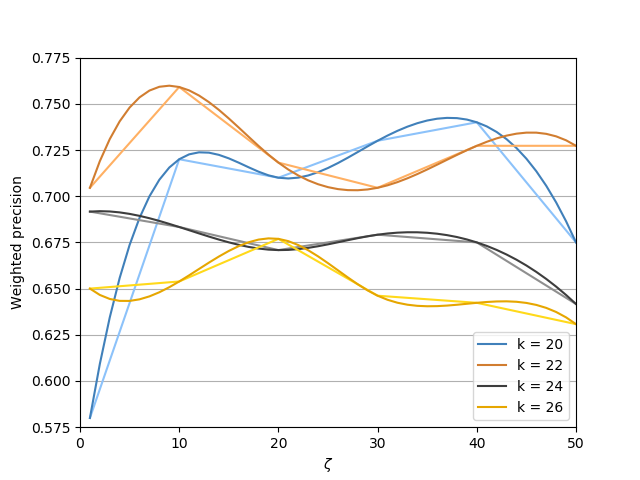}
				\caption{K-Medoids}
				\label{fig:kmeans-k}
			\end{subfigure}
			\hfill
			\vspace{-2mm}
			\caption{The weighted precision by $\zeta$ for various thresholds}
			\label{fig:ClusteringThresholds}
			\vspace{-2mm}
		\end{figure}
		\noindent Fig. \ref{fig:ClusteringThresholds} depicts the weighted precision based on the selected thresholds when the $\zeta$ (Section \ref{Sec:Word_Embedding}) varies.
		For DBSCAN (Fig \ref{fig:DB-Scan}), the value of 0.36 for $\epsilon$ can demonstrate the best performance for all $\zeta$ values. However, other thresholds turn up untrusted with many perturbations where $\zeta$ varies. As Fig. \ref{fig:kmeans-k} illustrates none of the thresholds performs significantly better than the others. Nevertheless, for $K=22$, the k-medoids model gains the highest quality at $\zeta=10$ and maintains the approximate precision for different values of $\zeta$.
		\vspace{-3mm}
		%
		\subsubsection{Impact of Clustering on Authors Concept Vectors}
		\vspace{-1mm}
		As elucidated in Section \ref{Impact of embedding and aggregation approaches on content vectors}, in this section we evaluate the precision of author concept vectors via two weighted metrics of $P_{Conceptual}$ and $P_{Textual}$. As shown in Table \ref{tab:authorconcept}, we report the weighted precisions based on three variations: (1) embedding type (CBOW vs. Collective), (2) the type of combination (Avg vs. Sum) for word vectors in generation of tweet vectors, (3) clustering type (K-Medoids vs DBSCAN) in constructing author concept vectors.
		As shown in Table \ref{tab:authorconcept}, our proposed time-aware collective model can outperform the CBOW model in both weighted precisions, where the overall improvement for $P_{Texutal}$ and $P_{Conceptual}$ are approximately 7 and 4 percent. The K-Medoids clustering performs better than DBSCAN. This is because the DBSCAN model can ignore outliers. We notice that the time-aware collective model performs the best @K=22, where the CBOW gains the lowest results. Since the normalized summation vectors resemble the average approach, their corresponding precision results turn the same. We can, therefore, overlook the impact of combination type in tweet generation.
		\vspace{-3mm}
		\subsubsection{Effect of vectors on Author Subgraph Mining}
		\label{Effect-of-vectors-on-Author-Subgraph-Mining}
		\vspace{-1mm}
		Author similarity matrices (denoted by $X^{Concept}$ and $X^{Content}$), can be combined by $\alpha$ to form the contextual author similarity matrix, denoted by $X^{Total-\alpha}$ (Section \ref{Generating author vectors}). We study the impact of $\alpha$ on the effectiveness of our approach that is measured by $\mathcal{P}_{Textual}$ and $\mathcal{P}_{Conceptual}$.\\
		\vspace{-8mm}
		\begin{figure}[H]
			\centering
			\includegraphics[width=0.6\linewidth]{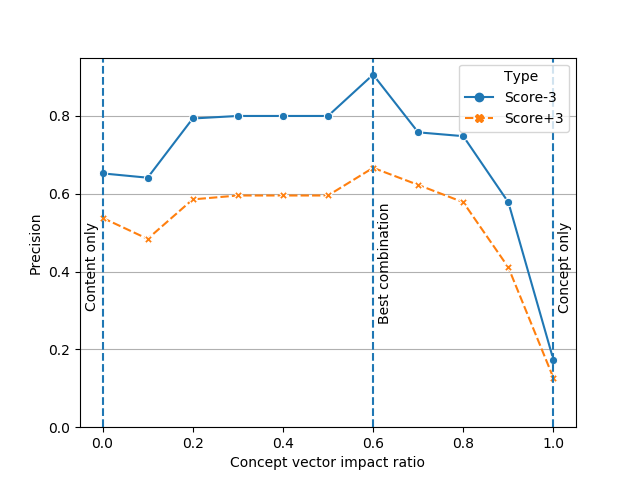}
			\vspace{-2mm}
			\caption{Impact of $\alpha$ (Concept impact ratio) on effectiveness}
			\label{fig:ConceptVectorImpact}
			\vspace{-4mm}
		\end{figure}
		As shown in Figure \ref{fig:ConceptVectorImpact}, $X^{Total-\alpha}$ provides the best precision in both metrics when $\alpha$ is set to 0.6. It is found that the effectiveness of author subgraph mining stops growing at $\alpha=0.6$. We notice that the decrease in performance becomes faster when $\alpha$ increases over 0.8. This can be explained by two rationales: First, the number of exploited concepts are limited to the current dataset, Second, the importance of the embedding process, reflected by $X^{Content}$, cannot be sacrificed in favor of the concept matrix.
		\vspace{-5mm}	
		\section{Conclusion}
		\label{conclusion}
		\vspace{-1mm}
		In this paper, we devise a novel framework that consumes short-text contents (e.g. tweets) to exploit subgraphs including highly correlated authors. To this end, we first need to link authors through computing the similarity edge weights between them, which results in the authors' weighted graph. Primarily, the time-aware word embedding model considers temporal-textual evidence to infer the similarity between temporal splits in multiple dimensions (e.g. Monday and Tuesday in day dimension) and collectively computes the word vector representations.  Subsequently, we obtain short-text vectors and \textit{author content vectors} using primary word vectors. Similarly, \textit{author concept vectors} represent how every author is relevant to each of the short-text clusters. We then fuse the content-based and conceptual author similarities to calculate the correlation weight between each pair of authors. Consequently, given the authors' weighted graph, the stack-wise graph cutting component in our framework can extract the maximum spanning trees that establish the subgraph of linked authors. The result of the extensive experiments on a real-world microblog dataset proves the superiority of our proposed model in short-text author linking. Moreover, we notice that compared to DBSCAN, the k-medoids clustering can better discover the concepts from tweet contents.\\
		Naturally, the short-texts differ in significance (e.g. popularity). Hence, to nominate the concepts from short-text clusters, we should not only consider the relevance of the short-texts but also grant higher importance to the concepts of those with higher popularity. We leave this task for future work.
		\vspace{-2mm}
		\ifCLASSOPTIONcaptionsoff
		\newpage
		\fi
		
		
		
		\vspace{-5mm}
		\bibliographystyle{IEEEtran}
		\bibliography{IEEEexample}
		\vspace{-14mm}

\begin{IEEEbiography}[{\includegraphics[width=1.0in,height=1.25in,clip,keepaspectratio]{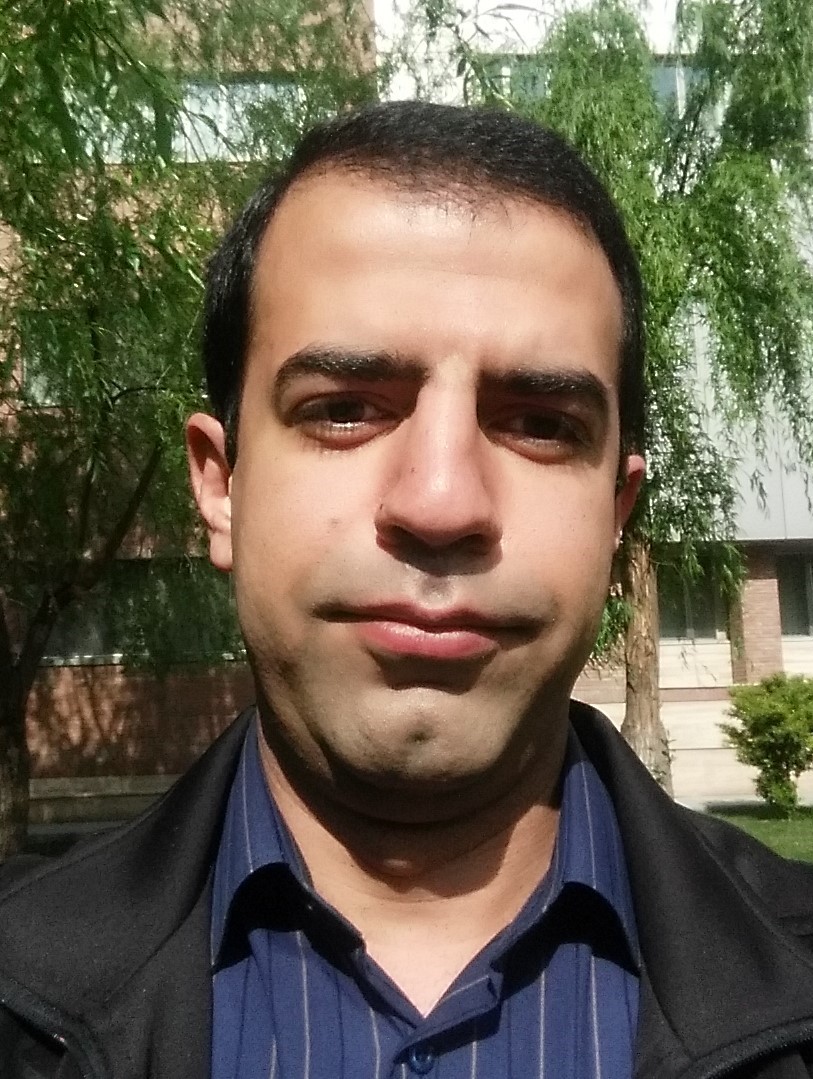}}]{Saeed NajafiPour}
	is a researcher in the computational cognitive models laboratory at Iran University of Science and Technology (IUST). He is to receive his M.Sc. in Software Engineering from IUST and completed his B.S. degree in computer software engineering from the Bahonar University, Iran. His research interests include Natural Language Processing, data mining, deep learning, and trajectory analytics.
\end{IEEEbiography}
\vspace{-14mm}
\begin{IEEEbiography}[{\includegraphics[width=1.0in,height=1.25in,clip,keepaspectratio]{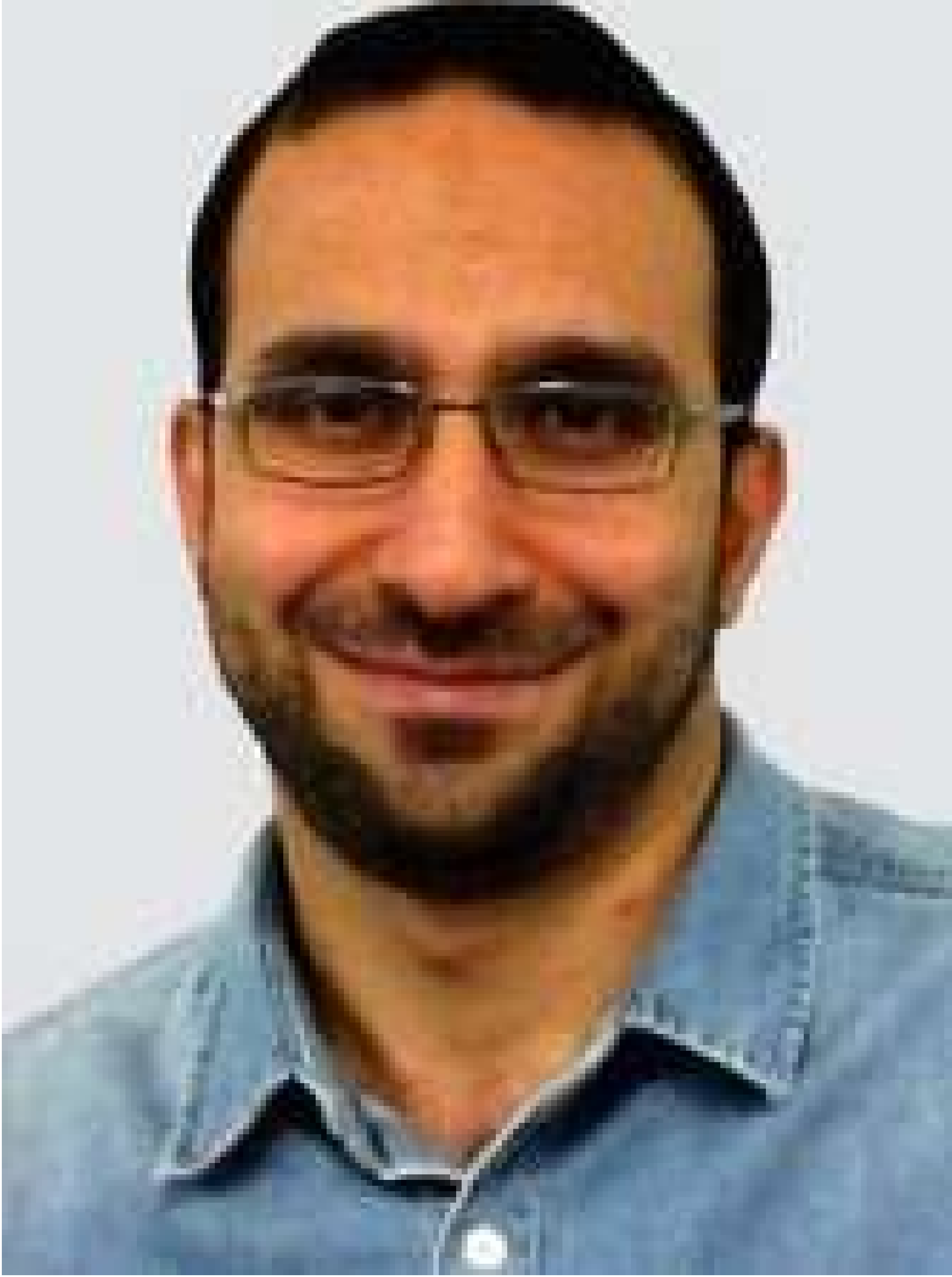}}]{Saeid Hosseini}
	won the Australian Postgraduate Award and received Ph.D. degree in Computer Science from the University of Queensland, Australia, in 2017. As a post-doc research scientist, his research interests include spatiotemporal database, dynamical processes, data and graph mining, big data analytics, recommendation systems, and machine learning. He has been a PC member in CSS and DASFAA and a reviewer in ICDM and TKDE.
\end{IEEEbiography}
\vspace{-14mm}
\begin{IEEEbiography}[{\includegraphics[width=1.0in,height=1.25in,clip,keepaspectratio]{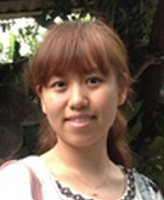}}]{Wen Hua}
currently works as a Lecturer at the University of Queensland, Australia where she was appointed as a Postdoctoral Research Fellow. She received her doctoral and bachelor degrees in Computer Science from Renmin University of China. Her current research interests include natural language processing, information extraction and retrieval, text mining, social media analysis, and spatiotemporal data analytics. She has published articles in reputed journals and conferences including SIGMOD, TKDE, VLDBJ.
\end{IEEEbiography}
\vspace{-14mm}
\begin{IEEEbiography}[{\includegraphics[width=1.0in,height=1.25in,clip,keepaspectratio]{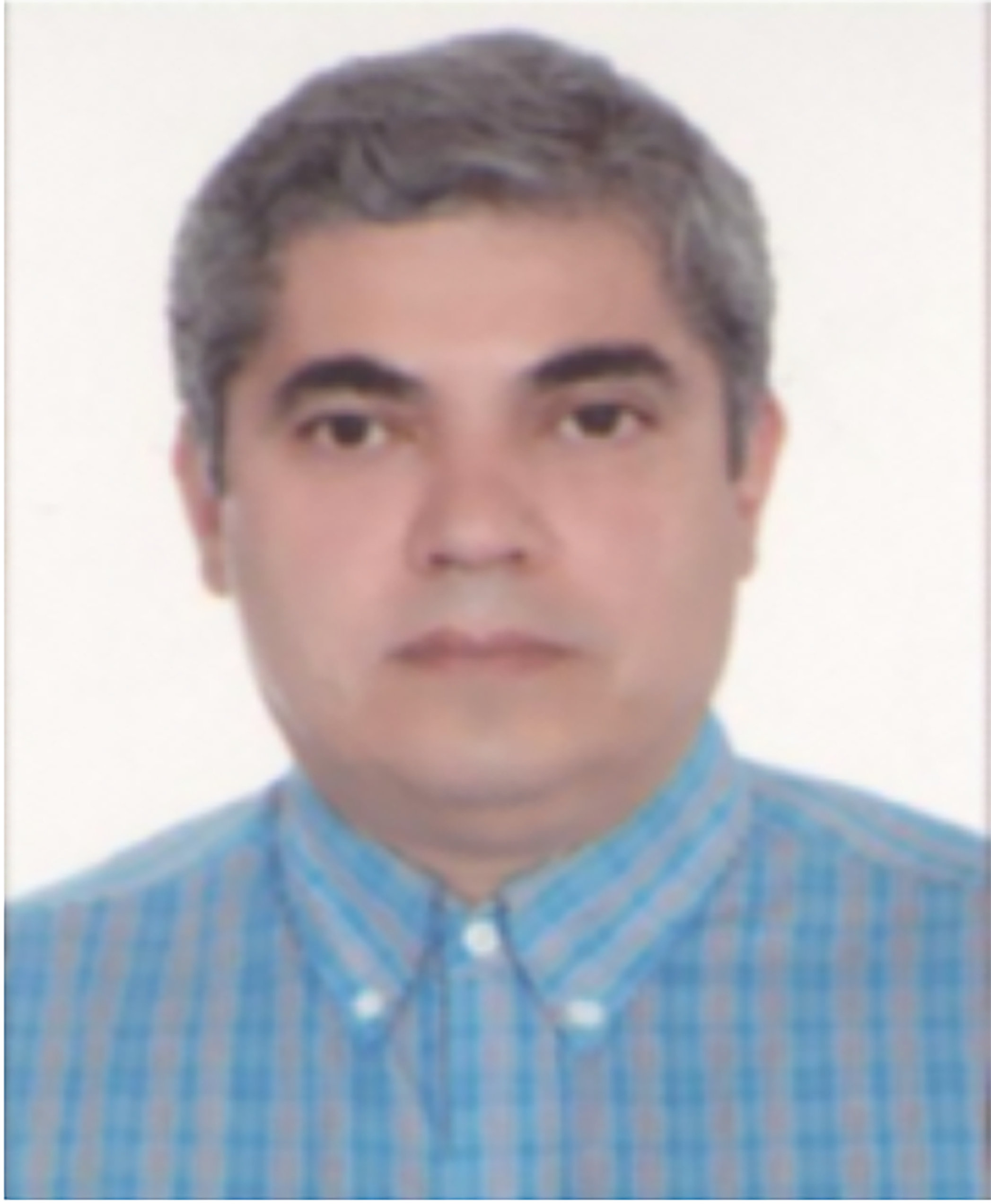}}]{Mohammad Reza Kangavari}
	received B.Sc. in mathematics and computer science from the Sharif University of Technology, M.Sc. in computer science from Salford University, and the Ph.D. in computer science from the University of Manchester in 1994. He is a lecturer in the Computer Engineering Department, Iran University of Science and Technology. His research interests include Intelligent Systems, Human-Computer-Interaction, Cognitive Computing, Data Science, Machine Learning, and Sensor Networks.
\end{IEEEbiography}
\vspace{-14mm}
\begin{IEEEbiography}[{\includegraphics[width=1.0in,height=1.25in,clip,keepaspectratio]{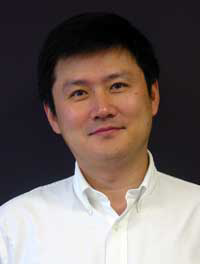}}]{Xiaofang Zhou}
	is a Professor of Computer Science at The University of Queensland, Leader of UQ Data Science Research Group and an IEEE Fellow. He received B.Sc. and M.Sc. in Computer Science from Nanjing University, China, and Ph.D. from UQ. His research interests include spatiotemporal and multimedia databases, data mining, query processing, big data analytics and machine learning, with over 300 publications in SIGMOD, VLDB, ICDE, AAAI, IJCAI, and IEEE and ACM Transactions.
\end{IEEEbiography}






\end{document}